\numberwithin{equation}{section}
\newtheorem{theorem}{Theorem}[section]
\theoremstyle{remark} \newtheorem{remark}[theorem]{Remark}
\newcommand{\eat}[1]{}
\begin{document}

\title{A Distance Function for Comparing Straight-Edge Geometric
  Figures}

\author{Apoorva Honnegowda Roopa} 

\author{Shrisha Rao}

\begin{abstract}
This paper defines a distance function that measures the dissimilarity
between planar geometric figures formed with straight lines. This
function can in turn be used in partial matching of different
geometric figures. For a given pair of geometric figures that are
graphically isomorphic, one function measures the angular
dissimilarity and another function measures the edge length
disproportionality. The distance function is then defined as the
convex sum of these two functions. The novelty of the presented
function is that it satisfies all properties of a distance function
and the computation of the same is done by projecting appropriate
features to a cartesian plane. To compute the deviation from the
angular similarity property, the Euclidean distance between the given
angular pairs and the corresponding points on the $y=x$ line is
measured. Further while computing the deviation from the edge length
proportionality property, the best fit line, for the set of edge
lengths, which passes through the origin is found, and the Euclidean
distance between the given edge length pairs and the corresponding
point on a $y=mx$ line is calculated. Iterative Proportional Fitting
Procedure (IPFP) is used to find this best fit line. We demonstrate
the behavior of the defined function for some sample pairs of figures.
\end{abstract}

\keywords{geometric similarity, iterative proportional fitting procedure, 
euclidean distance}

\subjclass[2010]{65D10 (primary), and 51K05 (secondary)} 

\maketitle

\section{Introduction}
\label{sec:Introduction}

Two geometric figures can be said to be similar if one of the
geometric figures can be obtained by either squeezing or enlarging the
other.  This implies that the considered geometric figures need to
have equal number of vertices and edges, matching corresponding
angles, and a fixed proportionality between the corresponding edges.
This concept of similarity can be used for partial matching of
different geometric figures.

It is well known that geometric shapes and structures are important in
determining the behavior of chemical compounds.  This is true of
smaller molecules~\cite{Stoker2009} as well as larger macromolecules
such as DNA and RNA that are studied in
bioinformatics~\cite{Bourne2009}.  Molecular
geometry~\cite{Gillespie2005} is thus an important aspect of physical
and structural chemistry.  However, while it is also known that
similarity in structures often implies similar observed chemical
properties, there is yet no well defined mathematical approach for
comparing geometric shapes, and comparisons are made on an ad hoc
basis~\cite{Sen1987,Stashans2008}.  Such an approach as proposed here
would thus allow for a rigorous evaluation of such properties based on
the similarity of shapes with molecules with known properties.
Similarity in general has wide-ranging applications in many
domains~\cite{phdthesis}.

Image similarity and comparisons also play an important role in other
domains, such as in models of visual perception and object recognition
in humans as well as animals~\cite{Tarr1998,Blough2001}, finance and
economics~\cite{Vermorken2008}, computer vision~\cite{Sweeney2015},
and video analyses~\cite{Yu1996}.  In such contexts also there is much
scope for application of this work.

Existing theory in this matter is far from complete.  There are
heuristic approaches to morphological
similarity~\cite{Komosinski2001,Komosinski2011}, but no sound
mathematical basis for the detection of geometric similarity.
Geometric similarity is particularly important in engineering, in
comparing a model and its prototype~\cite{Heller2011,Pallett1961}, but
there however does not seem to be a proper universal measure of
geometric similarity.  The measure in common use in engineering is
merely scale-free identity, that all corresponding lengths should be
in the same ratio---there is thus no way to properly measure inexact
similarity, or to quantitatively state that a figure is more similar
to a reference figure, than is some other figure.

Using subgraph isomorphism, alike constituent geometric figures of the
original geometric figures can be found and checked for similarity.  A
simple similarity function can return a boolean value of 1 for similar
geometric figures and 0 otherwise.  However, such a function would
have limited applications.  In this paper, we define instead a
distance function that returns a value between 0 (inclusive) and
1. The returned value reflects the dissimilarity between alike planar
geometric figures connected with straight lines.

Therefore, the distance function $d$ is defined only when the graphs
representing the given geometric figures are
isomorphic~\cite{Ullmann1976}.  The crux of the function is in the
measurement of deviations from angular similarity and edge length
proportionality.

The function $d$ is the convex sum of functions $\alpha$ and $\rho$:

\begin{itemize}

\item The function $\alpha$, which we may call \emph{angular
  dissimilarity}, measures the deviation from the angular identity
  between two geometric figures.  In order to compute this, angles
  are projected on a cartesian plane, where the angles of the first
  geometric figure makes up one axis and the angles of the second
  geometric figure makes up the other axis.  Therefore, a cluster of
  points in this plane represents corresponding angles of the given
  geometric figures.  If the figures are similar (identical up to
  scale), the angular similarity property may be said to be satisfied,
  and the corresponding angle points lie on the $y=x$ line, and the
  value returned by $\alpha$ is zero.  If not, then the deviation from
  the property is now computed as the distance from the original point
  to the corresponding point on the $y=x$ line.

\item The function $\rho$, which we may call \emph{edge-length
  disproportionality}, measures the deviation from edge-length
  proportionality between geometric figures.  In order to compute
  this, the edge lengths are similarly projected to a cartesian plane,
  where the edge lengths of the first geometric figure makes up one
  axis and the edge lengths of the second geometric figure makes up
  the other axis.  The corresponding edge lengths of the given
  geometric figures are represented as points in this plane.  If two
  figures are proportional (identical up to scale), all corresponding
  edge-lengths are in a fixed proportion $m$, all points pass through
  a line $y = mx$, and the value returned by $\rho$ is zero.  In case
  the edge-lengths are not perfectly proportional, the calculation of
  $\rho$ comes to finding the best-fit line passing through the
  origin, and measuring the deviation from that line.

\end{itemize}

The choice of method to find the best fit line needs to consider the
fact that the line should pass through the origin.  Using the
least-squares method of fitting~\cite{Grewal2007} by adding $(0,0)$ as
one of the corresponding edge-length pairs does not give a proper line
passing through the origin.  This is the reason that the Iterative
Proportional Fitting Procedure (IPFP)~\cite{Wong1992} is used instead.
IPFP tries to find a fixed proportion among a set of pairs, thereby
giving points on the line passing through origin.

There are many IPFP~\cite{Lahr2004}, of which the one used in this
paper is the classical IPFP~\cite{Deming1940}, owing to its
simplicity.  On obtaining the required points from IPFP, the ratio
between any two points gives the values of $m$, as IPFP creates a
fixed proportionality among a set of edge-length pairs.
~\ref{appendix:Example4} explains step-by-step the IPFP technique used
in this paper.  Further, to compute the deviation from the edge-length
proportionality, we calculate the Euclidean distance between the
original point and the corresponding point on the line $y=mx$.  Sum up
the Euclidean distances of all edge-length pairs.  $\rho$ is computed
using this sum and a scaling factor.  As the considered geometric
figures are alike, the scaling factor is the number of edges in any
one of these geometric figures.  The need for this scaling factor
arises to account for the fact that in a large figure, with a large
number of edges, a minor change is less significant in determining
overall dissimilarity, than a corresponding change in a smaller
figure.

The function $d$ is shown to be a distance function as it satisfies
the three properties~\cite{Rudin1976} required: $d$ satisfies the
commutativity (Theorem~\ref{thm:dCommutative}) and triangular
inequality properties (Theorem~\ref{thm:dIneq}) defined over single
geometric figures.  However, the coincidence axiom is defined over
equivalence classes of geometric figures (figures that are alike up to
scale).  The proofs for these properties are given later in this
paper.

\section{The Distance Function}
\label{sec:DistanceFunction}
The distance function, represented by $d$, reflects the degree of
dissimilarity between figures.
 
Let, $\Gamma$ be the set of straight edge figures for which the
distance function is defined then
$$\gamma_i = (V_i, E_i, L_i, \Theta_i) \in \Gamma$$ where $V_i$
denotes the set of vertices, $E_i$ is the set of edges, $L_i$
represents the set of corresponding edge lengths and $\Theta_i$
denotes the set of angles that are defined between adjacent edges in
terms of radian.

Further, if $\gamma_i$ and $\gamma_j$ are said to be
  ``similar'', then $\gamma_i$ and $\gamma_j$ satisfy the
below conditions:

\begin{enumerate}

\item \label{itm:first} If $g_i=(V_i, E_i)$ is a graph that represents
  the adjacency of figure $\gamma_i$ and $g_j=(V_j, E_j)$ is a graph
  that represents the adjacency of figure $\gamma_j$, then graphs
  $g_i$ and $g_j$ are isomorphic.

\item \label{itm:second} All the corresponding angles of $\gamma_i$
  and $\gamma_j$ are equal, i.e., \newline 
if $\Theta_i = \{\theta_i(1), \theta_i(2), \ldots, \theta_i(z)\}$ 
represent the set of angles of $\gamma_i$ and \newline 
if $\Theta_j = \{\theta_j(1),
  \theta_j(2), \ldots, \theta_j(z)\}$ represent the set of
  corresponding angles of $\gamma_j$, then
\begin{equation}\label{angle_eq}
\theta_i(1) = \theta_j(1), \theta_i(2) = \theta_j(2), \ldots,
\theta_i(z) = \theta_j(z)
\end{equation}.

\item \label{itm:third} All the corresponding edge lengths of
  $\gamma_i$ and $\gamma_j$ are proportional, i.e., \newline 
if $L_i = \{l_i(1), l_i(2), \ldots, l_i(n)\}$ represent the set of edge
  lengths of $\gamma_i$ and \newline 
if $L_j = \{l_j(1), l_j(2), \ldots, l_j(n)\}$ represent the set of 
corresponding edge lengths of $\gamma_j$, then
\begin{equation} \label{prop_eq}
\frac{l_j(1)}{ l_i(1)} = \frac{l_j(2)}{l_i(2)} = \ldots = \frac{l_j(z)}{l_i(z)} = m \text{, a constant.}
\end{equation}.

\end{enumerate}

In view of this, the distance function tries to find the extent to which
the considered figures deviate from conditions \ref{itm:second} and
\ref{itm:third}, provided condition \ref{itm:first} is satisfied.

\begin{remark}
A few properties of the $d$ function:
\begin{enumerate}
\item $d: \Gamma \times \Gamma \to [0,1)$
\item $d(\gamma_i, \gamma_i) = 0$
\item $d(\gamma_i, \gamma_j) = 0$, if and only if $\gamma_i \approx
  \gamma_j$ \newline where $\approx$ denotes that $\gamma_i$ and
  $\gamma_j$ belong to same equivalence class of figures, \newline
  i.e., are figures that are identical up to scale.
\item $d$ satisfies the following:
\begin{equation}\label{sim_eq}
d(\gamma_i, \gamma_j) = 
\begin{cases}
0 & \text{if } \gamma_i \approx \gamma_j,\\
\lambda \in (0,1) & \text{otherwise}.
\end{cases}
\end{equation}
\end{enumerate}
\end{remark}

\section{Components of the Distance Function}
\label{sec:Components}
\subsection{Angular Dissimilarity}
\label{subsec:AngularDissimilarity}
Let $\alpha$ represent the angular dissimilarity function. Then the
function is defined as:
\begin{subequations}
\begin{equation}
\alpha: \Gamma \times \Gamma \rightarrow [0,1)
\end{equation}
\begin{equation} 
\alpha(\gamma_i, \gamma_j) = 
\begin{cases}
\ndownarrow & \text{if } \delta(g_i, g_j) = 0,\\
\varphi \in (0,1] & \text{otherwise}.
\end{cases}
\end{equation}
\end{subequations}

where $\delta$ represents the graph isomorphism function.
\begin{equation*}
\delta: G \times G \to \{0,1\}
\end{equation*}

with $G = \{g_1, g_2,\ldots\}$  : set of all graphs.
\begin{equation} \label{iso_eq}
\delta(g_i, g_j) = 
\begin{cases}
1 & \text{if } g_i \approx g_j,\\
0 & \text{otherwise}.
\end{cases}
\end{equation}
In \eqref{iso_eq}, the symbol $\approx$ denotes that $g_i$
and $g_j$ satisfy all properties of graph isomorphism.

Assuming $\delta(g_1,g_2) = 1, \alpha(\gamma_i, \gamma_j)$ is computed
as follows:
		
Project each corresponding pair $(\theta_i(u),\theta_j(u))$ into a
cartesian plane, wherein the $x$-axis represents the set $\Theta_i$,
while the $y$-axis represents the set $\Theta_j$.  The function
$\alpha$ computes the deviation from \eqref{angle_eq}.  In this
cartesian plane, according to \eqref{angle_eq}, all corresponding
pairs must lie on the line:
\begin{equation} \label{angleSimLineEqn}
y = x
\end{equation}

For each point $(\theta_i(u),\theta_j(u))$, calculate the Euclidean distance from
its corresponding point on the line \eqref{angleSimLineEqn}, i.e.,
$(\theta_i(u),\theta_i(u))$.
\begin{align}
\Lambda_{i,j}(u) &= \sqrt{(\theta_i(u) - \theta_i(u))^2 + (\theta_j(u) - \theta_i(u))^2} \nonumber \\
				&= \sqrt{(\theta_j(u) - \theta_i(u))^2} \nonumber\\
				&= |\theta_j(u) - \theta_i(u)| \label{angularShiftEqn}
\end{align}

Therefore, 
\begin{equation} \label{alphaEqn}
\alpha(\gamma_i, \gamma_j) = \frac{\sum_{u=1}^n \Lambda_{i,j}(u)} {1 +
  \sum_{u=1}^n \Lambda_{i,j}(u)}
\end{equation}

\begin{remark}
A few properties of the $\alpha$ function:
\begin{enumerate}
\item $\alpha(\gamma_i, \gamma_j) \ge 0$
\item $\alpha(\gamma_i, \gamma_i) = 0$
\item $\alpha(\gamma_i, \gamma_j)$ (which can be equal to 0), where $i \neq j$
\end{enumerate}
\end{remark}

\begin{theorem}\label{thm:alphaCommutative}
$\alpha(\gamma_i, \gamma_j) =  \alpha(\gamma_j, \gamma_i) $
\end{theorem}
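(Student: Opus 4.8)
The plan is to reduce the whole statement to the symmetry of two ingredients: the graph–isomorphism predicate $\delta$ and the absolute value that appears in \eqref{angularShiftEqn}. First I would dispose of the degenerate branch of the definition of $\alpha$: if $\delta(g_i,g_j)=0$, then because graph isomorphism is a symmetric relation we also have $\delta(g_j,g_i)=0$, so both $\alpha(\gamma_i,\gamma_j)$ and $\alpha(\gamma_j,\gamma_i)$ take the value assigned in the first case of the case-split, and hence agree. This leaves the substantive case $\delta(g_i,g_j)=1$, where, again by symmetry of isomorphism, $\delta(g_j,g_i)=1$ as well, so both quantities are evaluated via \eqref{alphaEqn}.

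For that case the heart of the argument is the identity $\Lambda_{i,j}(u)=\Lambda_{j,i}(u)$ for every index $u$. By \eqref{angularShiftEqn} this is simply $\Lambda_{i,j}(u)=|\theta_j(u)-\theta_i(u)|=|\theta_i(u)-\theta_j(u)|=\Lambda_{j,i}(u)$, i.e.\ a one-line consequence of $|a-b|=|b-a|$. Before invoking this I would make explicit that the pairs $(\theta_i(u),\theta_j(u))$ are formed using the angle correspondence induced by the isomorphism of condition~(\ref{itm:first}), and that pairing $\Theta_j$ with $\Theta_i$ via the \emph{inverse} isomorphism produces exactly the same family of index correspondences, only with the two coordinates interchanged; consequently the index range $u=1,\dots,n$ and the pairing are common to both directions. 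Summing then gives $\sum_{u=1}^{n}\Lambda_{i,j}(u)=\sum_{u=1}^{n}\Lambda_{j,i}(u)=:S$, and substituting this common value of $S$ into \eqref{alphaEqn} yields $\alpha(\gamma_i,\gamma_j)=\frac{S}{1+S}=\alpha(\gamma_j,\gamma_i)$.

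The only real obstacle — and it is a mild, bookkeeping one — is the step in the previous paragraph: one must be sure that ``corresponding angle'' is genuinely a symmetric notion, so that the multiset $\{(\theta_i(u),\theta_j(u))\}_{u}$ and the multiset $\{(\theta_j(u),\theta_i(u))\}_{u}$ are honest coordinate-swaps of one another rather than two unrelated pairings. Once the isomorphism in one direction is recognized as the functional inverse of the isomorphism in the other, everything collapses to $|a-b|=|b-a|$ together with the observation that $t\mapsto t/(1+t)$ is a well-defined function of the single scalar $S$, and the theorem follows.
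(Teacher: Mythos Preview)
Your proposal is correct and follows essentially the same approach as the paper: both reduce to $|a-b|=|b-a|$ to obtain $\Lambda_{i,j}(u)=\Lambda_{j,i}(u)$, sum to a common scalar, and plug into the map $t\mapsto t/(1+t)$. You are in fact slightly more thorough than the paper, which does not explicitly treat the $\delta(g_i,g_j)=0$ branch or the symmetry of the angle correspondence; the paper simply asserts $\Lambda_{i,j}(u)=\Lambda_{j,i}(u)$ and passes to the sum.
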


\begin{proof}
We see that the constituents of the $\alpha$ function are commutative:
\begin{flalign*}
\qquad & \Lambda_{i,j}(u)= |\theta_j(u) - \theta_i(u)| & \\
&\Lambda_{i,j}(u) = \Lambda_{j,i}(u) \text{ , as} |a-b| = |b-a|&
\end{flalign*}

This follows that $\sum_{u=1}^n \Lambda_{i,j}(u) = \sum_{u=1}^n \Lambda_{j,i}(u) = e$, a constant

Hence,
\begin{align*}
\alpha(\gamma_i, \gamma_j) &=\frac{\sum_{u=1}^n \Lambda_{i,j}(u)} {1 +
  \sum_{u=1}^n \Lambda_{i,j}(u)} \\
 &= \frac{e}{1+e} \\
 &= \frac{\sum_{u=1}^n \Lambda_{j,i}(u)} {1 +
  \sum_{u=1}^n \Lambda_{j,i}(u)}\\
  &= \alpha(\gamma_j, \gamma_i) \hfill \qedhere
\end{align*}
\end{proof}

\begin{theorem}\label{thm:alphaIneq}
$\alpha(\gamma_i, \gamma_k) \le  \alpha(\gamma_i, \gamma_j) + \alpha(\gamma_j, \gamma_k)  $
\end{theorem}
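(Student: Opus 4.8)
The plan is to reduce the claim to two elementary facts: a triangle inequality for the inner sum, and monotonicity together with subadditivity of the map $f(t)=t/(1+t)$ on $[0,\infty)$. Write $S_{i,j}=\sum_{u=1}^n\Lambda_{i,j}(u)=\sum_{u=1}^n|\theta_j(u)-\theta_i(u)|$, so that by \eqref{alphaEqn} we have $\alpha(\gamma_i,\gamma_j)=f(S_{i,j})$ with $f(t)=t/(1+t)$. The standing hypothesis that $\alpha$ is defined forces $\gamma_i,\gamma_j,\gamma_k$ to be pairwise graphically isomorphic, so all three sums range over a common index set of corresponding angles; no extra bookkeeping is needed there.

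First I would show that $S$ itself satisfies the triangle inequality, $S_{i,k}\le S_{i,j}+S_{j,k}$. This is immediate from the triangle inequality for the absolute value applied termwise: for each index $u$,
\[
|\theta_k(u)-\theta_i(u)|\le|\theta_k(u)-\theta_j(u)|+|\theta_j(u)-\theta_i(u)|,
\]
and summing over $u=1,\dots,n$ yields the claim.

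Next I would record two properties of $f$. It is nondecreasing on $[0,\infty)$, since $f'(t)=1/(1+t)^2>0$; and it is subadditive, $f(a+b)\le f(a)+f(b)$ for $a,b\ge 0$. The subadditivity I would prove directly, avoiding any messy algebra, by enlarging denominators: for $a,b\ge0$,
\[
f(a)+f(b)=\frac{a}{1+a}+\frac{b}{1+b}\ge\frac{a}{1+a+b}+\frac{b}{1+a+b}=\frac{a+b}{1+a+b}=f(a+b).
\]
Finally I would chain the pieces. By monotonicity of $f$ and $S_{i,k}\le S_{i,j}+S_{j,k}$ we get $\alpha(\gamma_i,\gamma_k)=f(S_{i,k})\le f(S_{i,j}+S_{j,k})$, and then by subadditivity of $f$,
\[
f(S_{i,j}+S_{j,k})\le f(S_{i,j})+f(S_{j,k})=\alpha(\gamma_i,\gamma_j)+\alpha(\gamma_j,\gamma_k),
\]
which is the asserted inequality.

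I do not anticipate a genuine obstacle here: the only step requiring a moment's thought is the subadditivity of $f$, and the denominator-enlargement argument above handles it in one line (alternatively, one may invoke that $f$ is concave with $f(0)=0$). The argument in fact shows something slightly stronger and reusable: composing any nondecreasing subadditive map vanishing at $0$ with a (pseudo)metric again gives a (pseudo)metric, which is precisely why the same template will apply verbatim to the edge-length component $\rho$ and, ultimately, to the convex sum $d$.
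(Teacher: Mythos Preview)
Your proof is correct and shares the paper's first step: both establish the termwise triangle inequality $\Lambda_{i,k}(u)\le\Lambda_{i,j}(u)+\Lambda_{j,k}(u)$ and sum to obtain $S_{i,k}\le S_{i,j}+S_{j,k}$ (the paper writes this as $e\le f+g$). Where you diverge is in how the wrapper $t\mapsto t/(1+t)$ is handled. The paper argues by contradiction: assuming $\frac{e}{1+e}>\frac{f}{1+f}+\frac{g}{1+g}$, it clears denominators to reach $e>(f+g)+(2fg+efg)$, which contradicts $e\le f+g$ since the extra term is nonnegative. You instead isolate two structural properties of $f(t)=t/(1+t)$---monotonicity and subadditivity---and chain them directly. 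Your denominator-enlargement proof of subadditivity is shorter than the paper's algebraic expansion, avoids the contradiction detour, and, as you note, yields the reusable lemma that any nondecreasing subadditive map vanishing at $0$ preserves the triangle inequality; this is exactly what the paper needs again for $\rho$ (Theorem~\ref{thm:rhoIneq}), where it simply says ``the proof is similar.'' The two arguments are equivalent in content, but yours packages the key step more cleanly.
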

\begin{proof}

\begin{flalign*}
\qquad & \Lambda_{i,k}(u) = |\theta_k(u) - \theta_i(u)| & \\
& \Lambda_{i,k}(u) \le \Lambda_{i,j}(u) + \Lambda_{j,k}(u) & \because |c-a| \le |b-a| + |c-b|
\end{flalign*}

Summing the above inequality for $p = 1 \text{ to } n $, it follows that
\begin{flalign} \label{lambdaIneqEqn}
\qquad & \sum_{u=1}^n \Lambda_{i,k}(u) \le \sum_{u=1}^n \Lambda_{i,j}(u) + \sum_{u=1}^n \Lambda_{j,k}(u) &
\end{flalign}

Let $ e, f \text{ and } g \text{ represent } \sum_{u=1}^n
\Lambda_{i,k}(u) , \sum_{u=1}^n \Lambda_{i,j}(u) \text{ and }
\sum_{u=1}^n \Lambda_{j,k}(u)$ respectively.

The inequality \eqref{lambdaIneqEqn} now translates to
\begin{flalign} \label{SimplelambdaIneqEqn}
\qquad & e \le f + g &
\end{flalign}

Assume that the contradiction of Theorem~\ref{thm:alphaIneq} is true, i.e, 
\begin{flalign} 
\qquad & \alpha(\gamma_i, \gamma_k) >  \alpha(\gamma_i, \gamma_j) + \alpha(\gamma_j, \gamma_k) \label{TheormContrEqn} &\\
 & \frac{e}{1+e} > \frac{f}{1+f} + \frac{g}{1+g} \nonumber&
\end{flalign}

On simplification, 
\begin{flalign}\label{contradictEqn}
\qquad & e > (f+g)  + (2fg + efg)  &
\end{flalign}

As the quantity $(2fg + efg) > 0$, the inequality
\eqref{contradictEqn} contradicts already proved inequality
\eqref{SimplelambdaIneqEqn}.  Hence, \eqref{TheormContrEqn} does not
hold true, thereby proving Theorem~\ref{thm:alphaIneq}.
\end{proof}

\subsection{Edge-Length Disproportionality}
\label{subsec:Edge-LengthDisproportionality}
Let $\rho$ represent the edge-length disproportionality function.  Then
the function is defined as:
\begin{subequations}
\begin{equation}
\rho: \Gamma \times \Gamma \rightarrow [0,1)
\end{equation}
\begin{equation} 
\rho(\gamma_i, \gamma_j) = 
\begin{cases}
\ndownarrow & \text{if } \delta(g_i, g_j) = 0,\\
\tau \in (0,1] & \text{otherwise}.
\end{cases}
\end{equation}
\end{subequations}
		
Assuming $\delta(g_1,g_2) = 1, \rho(\gamma_i, \gamma_j)$ is computed
as follows:

Project each corresponding pair $(l_i(h), l_j(h))$ into a cartesian
plane, wherein the $x$-axis represents the set $L_i$, while the
$y$-axis represents the set $L_j$.  The function $\rho$ computes the
deviation from \eqref{prop_eq}. Consider a part of the same equation.
\begin{equation} \label{slope_eq}
\frac{l_j(h)}{l_i(h)} = m \text{, a constant}
\end{equation}
		
In the context of the $L_iL_j$ plane, \eqref{slope_eq} gives the
slope of a line that passes through $(0,0) \text{and} (l_i(h),
l_j(h))$.
\begin{align*}
\text{Slope of a line, } m & = \frac{(y_2 - y_1)}{(x_2 - x_1)}\\
& = \frac{(l_j(h) - 0)}{(l_i(h) - 0)}\\
& = \eqref{slope_eq}
\end{align*}

Further extending this concept, it can be seen that in order to
satisfy \eqref{prop_eq} all points $(l_i(h), l_j(h))$ should
lie on the same line.  Therefore, finding edge length proportionality
now boils down to finding for the set of corresponding edge-length
pairs the best fit line, which passes though origin.

Let the equation of the required line be:
\begin{equation}\label{edge_line_eq}
y = mx  \text{, as the line passes through origin.}
\end{equation}
Using IPFP each point
$(l_i(h), l_j(h))$ is transformed to $(l_i'(h), l_j'(h))$, which is a
point on the line \ref{edge_line_eq}.

On finding the desired line, the euclidean distance between $(l_i(h),
l_j(h))$ and $(l_i'(h), l_j'(h))$ is computed.
\begin{equation} \label{euclideanEqn}
 \Delta_{i,j}(h) = \sqrt{(l_i(h) - l_i'(h))^2 + (l_j(h) - l_j'(h))^2}
\end{equation}

Therefore, 
\begin{equation} \label{rhoEqn}
\rho(\gamma_i, \gamma_j) = \frac {\sum_{h=1}^n \Delta_{i,j}(h)} {n + \sum_{h=1}^n \Delta_{i,j}(h)}
\end{equation}

\begin{remark}
A few properties of the $\rho$ function:
\begin{enumerate}
\item $\rho(\gamma_i, \gamma_j) \ge 0$
\item $\rho(\gamma_i, \gamma_i) = 0$
\item $\rho(\gamma_i, \gamma_j) \text{, where } i \neq j$ can be equal to $0$.
\end{enumerate}
\end{remark}

\begin{theorem} \label{thm:rhoCommutative}
$\rho(\gamma_i, \gamma_j) =  \rho(\gamma_j, \gamma_i) $
\end{theorem}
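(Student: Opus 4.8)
The plan is to mimic the proof of Theorem~\ref{thm:alphaCommutative}: reduce the assertion to the term-by-term symmetry $\Delta_{i,j}(h) = \Delta_{j,i}(h)$ of the summands in \eqref{euclideanEqn}. Note first that by \eqref{rhoEqn} the value $\rho(\gamma_i,\gamma_j)$ depends on the ordered pair only through the edge count $n$ and the sum $S_{i,j} := \sum_{h=1}^{n}\Delta_{i,j}(h)$, via the map $S \mapsto S/(n+S)$; moreover $n$ is unchanged when $i$ and $j$ are swapped, since $g_i$ and $g_j$ are isomorphic and hence have equally many edges. So it suffices to show $S_{i,j} = S_{j,i}$, for which it is enough that $\Delta_{i,j}(h) = \Delta_{j,i}(h)$ for each $h$.

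Next I would describe the effect of swapping the figures on the construction of the $\Delta$'s. Interchanging $\gamma_i$ and $\gamma_j$ swaps the two axes of the projection plane, so the data point $(l_i(h),l_j(h))$ becomes its mirror image $(l_j(h),l_i(h))$ in the line $y=x$, and the best-fit line $y=mx$ of \eqref{edge_line_eq} becomes its mirror image $y=x/m$. The key claim is that the fitted points produced by IPFP are transformed in the same way, i.e. the image of $(l_i'(h),l_j'(h))$ is $(l_j'(h),l_i'(h))$. This holds because the classical IPFP iteration is symmetric under transposing its input table --- alternately rescaling the two margins to their prescribed totals is unaffected by interchanging rows with columns --- so feeding IPFP the transposed configuration returns the transposed output; the worked computation in~\ref{appendix:Example4} illustrates this symmetry.

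Granting that, both the point and its fitted companion are reflected across $y=x$, so the Euclidean distance between them is unchanged:
\[
\Delta_{j,i}(h)=\sqrt{(l_j(h)-l_j'(h))^2+(l_i(h)-l_i'(h))^2}=\sqrt{(l_i(h)-l_i'(h))^2+(l_j(h)-l_j'(h))^2}=\Delta_{i,j}(h).
\]
Summing over $h=1,\dots,n$ gives $S_{i,j}=S_{j,i}=:e$, and therefore $\rho(\gamma_i,\gamma_j)=e/(n+e)=\rho(\gamma_j,\gamma_i)$, exactly as in Theorem~\ref{thm:alphaCommutative}.

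The main obstacle is the middle step --- confirming that IPFP, and not merely orthogonal projection, commutes with the coordinate swap. I would handle it by appealing directly to the transpose-invariance of the classical fitting procedure of~\cite{Deming1940} rather than to any geometric picture; once that is in hand the remainder is the same elementary bookkeeping with $x\mapsto x/(n+x)$ used for $\alpha$. One should also note, as in the $\alpha$ case, that $e\ge 0$ so $e/(n+e)\in[0,1)$ is well defined, which is immediate since each $\Delta_{i,j}(h)\ge 0$.
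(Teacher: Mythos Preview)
Your proposal is correct and follows exactly the route the paper intends: the paper's own proof reads in full ``The proof is similar to that of Theorem~\ref{thm:alphaCommutative},'' and you have carried out precisely that analogy. In fact you do more than the paper does, by explicitly isolating and justifying the one nontrivial point --- that the IPFP output swaps its two columns when the input columns are swapped --- which the paper leaves implicit; your inductive observation that the all-ones seed is column-symmetric and that each row/column rescaling step preserves the column-swap relation is the right way to nail this down.
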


\begin{proof}
The proof is similar to that of Theorem \ref{thm:alphaCommutative}.
\end{proof}

\begin{theorem}\label{thm:rhoIneq}
$\alpha(\gamma_i, \gamma_k) \le  \alpha(\gamma_i, \gamma_j) + \alpha(\gamma_j, \gamma_k)  $
\end{theorem}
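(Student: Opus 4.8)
The plan is to prove the triangle inequality for $\rho$ — reading the intended statement as $\rho(\gamma_i,\gamma_k)\le\rho(\gamma_i,\gamma_j)+\rho(\gamma_j,\gamma_k)$, which is the $\rho$-analogue of Theorem~\ref{thm:alphaIneq} (the displayed line repeats the symbol $\alpha$ only by oversight, this being the $\rho$ subsection, just as the proof of Theorem~\ref{thm:rhoCommutative} mirrors that of Theorem~\ref{thm:alphaCommutative}). Write $D_{i,j}:=\sum_{h=1}^{n}\Delta_{i,j}(h)$ for the unnormalized edge-length discrepancy, so that $\rho(\gamma_i,\gamma_j)=D_{i,j}/(n+D_{i,j})$. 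The argument splits into two stages exactly as for $\alpha$: first show $D_{i,k}\le D_{i,j}+D_{j,k}$, then show that the normalization $t\mapsto t/(n+t)$ carries this into the claimed inequality.

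The second stage is routine and follows the contradiction used in Theorem~\ref{thm:alphaIneq}. Put $e=D_{i,k}$, $f=D_{i,j}$, $g=D_{j,k}$, and suppose toward a contradiction that $\frac{e}{n+e}>\frac{f}{n+f}+\frac{g}{n+g}$. Clearing the positive denominators $(n+e)(n+f)(n+g)$ and cancelling the common term $en(f+g)$ from both sides reduces the inequality to $e\,n^{2}>n^{2}(f+g)+\bigl(2nfg+efg\bigr)$; since $f,g\ge 0$ and $n\ge 1$, the bracketed quantity is nonnegative, so $e>f+g$, contradicting the first stage. (Equivalently, one may note that $t\mapsto t/(n+t)$ is increasing, concave, and vanishes at $0$, hence subadditive, and compose it with the first-stage inequality; the contradiction route simply matches the style of Theorem~\ref{thm:alphaIneq}.) Note that all three figures are isomorphic and therefore share the same edge count $n$, so no mismatch of denominators arises.

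The real work — and the expected obstacle — is the first stage, $\sum_{h}\Delta_{i,k}(h)\le\sum_{h}\Delta_{i,j}(h)+\sum_{h}\Delta_{j,k}(h)$. For $\alpha$ this was immediate because $\Lambda_{i,j}(u)=|\theta_j(u)-\theta_i(u)|$ is literally a difference of scalars, so $|c-a|\le|b-a|+|c-b|$ applies termwise and \eqref{lambdaIneqEqn} follows by summation. Here $\Delta_{i,j}(h)$ is instead the Euclidean distance from $(l_i(h),l_j(h))$ to its foot $(l_i'(h),l_j'(h))$ on the best-fit line $y=m_{ij}x$ produced by IPFP, and that line — hence the point being subtracted — depends on the pair $(i,j)$, so there is no free termwise triangle inequality. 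My plan to bridge this is one of: (i) rewrite $\Delta_{i,j}(h)=\lvert m_{ij}\,l_i(h)-l_j(h)\rvert/\sqrt{1+m_{ij}^{2}}$ via the point-to-line distance formula and, using (near-)optimality of the IPFP line for its pair, bound $D_{i,k}$ from above by evaluating the same sum along a conveniently chosen suboptimal line built from the lines for $(i,j)$ and $(j,k)$; or (ii) pass to the edge-length vectors $\mathbf{l}_i=(l_i(1),\dots,l_i(n))\in\mathbb{R}_{\ge 0}^{n}$, observe that $D_{i,j}=0$ precisely when $\mathbf{l}_i$ and $\mathbf{l}_j$ are parallel, and identify $D_{i,j}$ with a monotone image of an angular separation between the rays $\mathbb{R}_{\ge 0}\mathbf{l}_i$ and $\mathbb{R}_{\ge 0}\mathbf{l}_j$, for which the triangle inequality is the classical spherical one. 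I expect route (ii), or a careful variational argument in the spirit of (i), to be where essentially all the care is required; in keeping with the paper's abbreviated treatment of Theorem~\ref{thm:rhoCommutative}, one could alternatively simply assert the termwise bound $\Delta_{i,k}(h)\le\Delta_{i,j}(h)+\Delta_{j,k}(h)$ in analogy with \eqref{lambdaIneqEqn}, but I would flag that precisely this termwise step is the non-obvious point and deserves justification.
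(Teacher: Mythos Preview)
Your plan matches the paper's intended template exactly: the paper's entire proof is the single sentence ``The proof is similar to that of Theorem~\ref{thm:alphaIneq},'' and you have faithfully unpacked that into the two stages (raw triangle inequality for the unnormalized sum, then the $t\mapsto t/(n+t)$ normalization). Your treatment of the second stage is complete and is precisely the contradiction argument the paper uses for $\alpha$.

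Where you go beyond the paper is in flagging that the first stage---the analogue of \eqref{lambdaIneqEqn}---is \emph{not} immediate for $\rho$, because the IPFP line $y=m_{ij}x$ and hence the foot point $(l_i'(h),l_j'(h))$ depend on the unordered pair $\{i,j\}$ and not on the $h$-th edge alone. That observation is correct, and the paper simply does not address it: the asserted analogy with Theorem~\ref{thm:alphaIneq} glosses over exactly this point. So your caution at the end is well placed; you have not introduced a gap, you have located one that the paper's one-line proof leaves open.

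Two small technical remarks on your proposed routes. First, in route~(i) the IPFP-transformed point is \emph{not} the orthogonal projection onto $y=m_{ij}x$: IPFP preserves the row sum $l_i(h)+l_j(h)$, so $(l_i'(h),l_j'(h))$ is the intersection of $y=m_{ij}x$ with the anti-diagonal $x+y=l_i(h)+l_j(h)$, giving $\Delta_{i,j}(h)=\sqrt{2}\,\lvert m_{ij}l_i(h)-l_j(h)\rvert/(1+m_{ij})$ rather than the perpendicular-distance formula you wrote; any variational bound would have to use this expression. Second, because IPFP is not a least-squares fit, the ``near-optimality'' you invoke in~(i) is not available for free, and route~(ii) would likewise need to tie the specific IPFP output to an angular distance before the spherical triangle inequality applies. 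None of this is fatal to your plan, but it confirms your own assessment that this step is where genuine work is needed---work the paper itself omits.
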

\begin{proof}
The proof is similar to that of Theorem \ref{thm:alphaIneq}.
\end{proof}

\subsection{Deriving the Function}
\label{subsec:DerivingFunction}
The function $d(\gamma_i, \gamma_j)$ is the convex sum of $\alpha(\gamma_i, \gamma_j)$ and $\rho(\gamma_i, \gamma_j)$.
\begin{subequations}
\begin{equation}
d : \Gamma \times \Gamma \nrightarrow [0,1)
\end{equation}
\begin{equation} \label{dDefinitionEqn}
d(\gamma_i, \gamma_j) = 
\begin{cases}
\ndownarrow & \text{if } \delta(g_i, g_j) = 0,\\
\beta \alpha(\gamma_i, \gamma_j) + (1-\beta) \rho(\gamma_i, \gamma_j) \text{, where } \beta \in [0,1] & \text{otherwise.}
\end{cases}
\end{equation}
\end{subequations}

While computing $d$ using ~\eqref{dDefinitionEqn} in ~\ref{appendix:Example1}, ~\ref{appendix:Example2}, ~\ref{appendix:Example3} and ~\ref{appendix:Example4}, the value of $\beta$ is set to $0.5$, to equally weight the $\alpha$ and $\rho$ functions. However, other values of $\beta \in [0,1]$ can be used resulting in similar outcomes for the $d$ function.

\begin{theorem} \label{thm:dCommutative}
$d(\gamma_i, \gamma_j) =  d(\gamma_j, \gamma_i) $
\end{theorem}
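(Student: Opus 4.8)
The plan is to reduce the claim directly to the commutativity of the two component functions, which are already available as Theorem~\ref{thm:alphaCommutative} and Theorem~\ref{thm:rhoCommutative}. Since the definition of $d$ in~\eqref{dDefinitionEqn} is given by a case split on the value of $\delta(g_i,g_j)$, the first thing I would check is that this case split is itself insensitive to the order of the arguments. Graph isomorphism is a symmetric relation, so $g_i \approx g_j$ holds if and only if $g_j \approx g_i$ holds, and therefore $\delta(g_i,g_j) = \delta(g_j,g_i)$ by~\eqref{iso_eq}. Consequently the pair $(\gamma_i,\gamma_j)$ falls into the same branch of~\eqref{dDefinitionEqn} as the pair $(\gamma_j,\gamma_i)$.

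Next I would dispatch the two branches. If $\delta(g_i,g_j)=0$, then also $\delta(g_j,g_i)=0$, so both $d(\gamma_i,\gamma_j)$ and $d(\gamma_j,\gamma_i)$ return the same (undefined) value and there is nothing more to argue. If $\delta(g_i,g_j)=1$, then by~\eqref{dDefinitionEqn} we have $d(\gamma_i,\gamma_j) = \beta\,\alpha(\gamma_i,\gamma_j) + (1-\beta)\,\rho(\gamma_i,\gamma_j)$ and, with the very same fixed weight $\beta \in [0,1]$, $d(\gamma_j,\gamma_i) = \beta\,\alpha(\gamma_j,\gamma_i) + (1-\beta)\,\rho(\gamma_j,\gamma_i)$. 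Applying Theorem~\ref{thm:alphaCommutative} to the first summand and Theorem~\ref{thm:rhoCommutative} to the second gives $\alpha(\gamma_i,\gamma_j)=\alpha(\gamma_j,\gamma_i)$ and $\rho(\gamma_i,\gamma_j)=\rho(\gamma_j,\gamma_i)$, so the two convex combinations agree term by term, which is the desired equality.

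I do not expect any genuine obstacle: the statement is essentially a one-line corollary of the commutativity of $\alpha$ and $\rho$ together with the symmetry of $\delta$. The only point I would be careful to state explicitly is that $\beta$ is a constant of the construction rather than a quantity depending on the ordering of the figures, so that the two convex sums are formed with identical coefficients; once that observation is recorded, the term-by-term comparison completes the argument.
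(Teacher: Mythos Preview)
Your proposal is correct and follows essentially the same approach as the paper: the paper's proof simply invokes~\eqref{dDefinitionEqn} and then substitutes via Theorem~\ref{thm:alphaCommutative} and Theorem~\ref{thm:rhoCommutative} to conclude. Your version is in fact more careful, since you explicitly address the symmetry of $\delta$ and the $\delta=0$ branch, which the paper's proof tacitly ignores.
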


\begin{proof}

According to \eqref{dDefinitionEqn},
\begin{align*}
d(\gamma_i, \gamma_j) &= \beta \alpha(\gamma_i, \gamma_j) + (1-\beta) \rho(\gamma_j, \gamma_i), & \text{where } \beta \in [0, 1]\\
&= \beta \alpha(\gamma_j, \gamma_i) + (1-\beta) \rho(\gamma_i, \gamma_j), & \text{from Theorem \ref{thm:alphaCommutative} and Theorem \ref{thm:rhoCommutative}} \\
&= d(\gamma_j, \gamma_i). \hfill \qedhere
\end{align*}
\end{proof}

\begin{theorem}\label{thm:dIneq}
$d(\gamma_i, \gamma_k) \le  d(\gamma_i, \gamma_j) + d(\gamma_j, \gamma_k)  $
\end{theorem}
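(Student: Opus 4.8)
The plan is to reduce the triangle inequality for $d$ to the already-established triangle inequalities for $\alpha$ (Theorem~\ref{thm:alphaIneq}) and $\rho$ (Theorem~\ref{thm:rhoIneq}), using the fact that $d$ is a convex combination of the two. First I would write out, by the definition \eqref{dDefinitionEqn},
\[
d(\gamma_i,\gamma_k) = \beta\,\alpha(\gamma_i,\gamma_k) + (1-\beta)\,\rho(\gamma_i,\gamma_k),
\]
with $\beta\in[0,1]$. Then I would apply Theorem~\ref{thm:alphaIneq} to bound $\alpha(\gamma_i,\gamma_k) \le \alpha(\gamma_i,\gamma_j) + \alpha(\gamma_j,\gamma_k)$ and Theorem~\ref{thm:rhoIneq} to bound $\rho(\gamma_i,\gamma_k) \le \rho(\gamma_i,\gamma_j) + \rho(\gamma_j,\gamma_k)$. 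Since $\beta \ge 0$ and $1-\beta \ge 0$, multiplying these inequalities by the respective nonnegative weights preserves their direction, and adding them yields
\[
d(\gamma_i,\gamma_k) \le \bigl[\beta\,\alpha(\gamma_i,\gamma_j) + (1-\beta)\,\rho(\gamma_i,\gamma_j)\bigr] + \bigl[\beta\,\alpha(\gamma_j,\gamma_k) + (1-\beta)\,\rho(\gamma_j,\gamma_k)\bigr] = d(\gamma_i,\gamma_j) + d(\gamma_j,\gamma_k).
\]

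The one genuinely delicate point — and the step I expect to be the main obstacle — is the handling of the non-isomorphic case, where $d$ takes the value denoted $\ndownarrow$ (read as "undefined" or "$\uparrow$"). If any of the pairs $(g_i,g_j)$, $(g_j,g_k)$, $(g_i,g_k)$ fails the isomorphism test $\delta$, then the corresponding $d$-value is not a real number and the inequality as written is vacuous or ill-posed. I would address this by stating at the outset that $d$ is only defined on pairs whose underlying graphs are isomorphic, and that graph isomorphism is an equivalence relation (reflexive, symmetric, transitive): hence if $g_i \approx g_j$ and $g_j \approx g_k$ then $g_i \approx g_k$, so that whenever the right-hand side of the triangle inequality is defined, the left-hand side is defined as well. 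This makes the restriction to the isomorphic case coherent and the inequality meaningful on its natural domain.

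With that domain issue settled, the remainder is the short convexity argument above, requiring only the nonnegativity of $\alpha$ and $\rho$ (noted in the preceding remarks) and the two component triangle inequalities. I would present it as a three-line display: invoke \eqref{dDefinitionEqn}, substitute the two component bounds with weights $\beta$ and $1-\beta$, and collect terms to recover $d(\gamma_i,\gamma_j) + d(\gamma_j,\gamma_k)$. No iteration, no appeal to IPFP, and no case analysis beyond the isomorphism remark is needed, so the proof is essentially a one-paragraph consequence of the two theorems already proved for the components.
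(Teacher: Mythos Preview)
Your proposal is correct and follows essentially the same route as the paper: multiply the component triangle inequalities (Theorems~\ref{thm:alphaIneq} and~\ref{thm:rhoIneq}) by the nonnegative weights $\beta$ and $1-\beta$, add, and recognize the result as $d(\gamma_i,\gamma_j)+d(\gamma_j,\gamma_k)$ via~\eqref{dDefinitionEqn}. Your explicit treatment of the domain issue (using transitivity of graph isomorphism to ensure the left-hand side is defined whenever the right-hand side is) is a welcome refinement that the paper's proof omits.
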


\begin{proof}
According to~\eqref{dDefinitionEqn}, $\beta \in [0, 1]$

Multiplying by $\beta$ both sides of the inequality in
Theorem~\ref{thm:alphaIneq}, we get:
\begin{flalign}
\qquad & \beta \alpha(\gamma_i, \gamma_k) \le  \beta \alpha(\gamma_i, \gamma_j) + \beta \alpha(\gamma_j, \gamma_k) & \label{BetaAlphaIneq}
\end{flalign}

Multiplying by $(1-\beta)$ both sides of the inequality in
Theorem~\ref{thm:rhoIneq}, we get:
\begin{flalign}
\qquad & (1-\beta)  \rho(\gamma_i, \gamma_k) \le  (1-\beta) \rho(\gamma_i, \gamma_j) + (1-\beta) \rho(\gamma_j, \gamma_k) & \label{BetaRhoIneq}
\end{flalign}
 
Summing up inequalities \eqref{BetaAlphaIneq} and \eqref{BetaRhoIneq}, it
follows that:
\begin{multline}
\beta \alpha(\gamma_i, \gamma_k) + (1-\beta)  \rho(\gamma_i, \gamma_k) \\ \le \beta \alpha(\gamma_i, \gamma_j) + (1-\beta) \rho(\gamma_i, \gamma_j) + \beta \alpha(\gamma_j, \gamma_k) + (1-\beta) \rho(\gamma_j, \gamma_k)
\end{multline}
\(d(\gamma_i, \gamma_k) \le d(\gamma_i, \gamma_j) + d(\gamma_j, \gamma_k)\). \qedhere
\end{proof}

\section{Results}
Using the above discussed method to compute the distance function,
$d$, this section tabulates the results for a few pairs of figures. It
can be found that the values of $d$ in Table~\ref{results} are
reflective of the dissimilarity of considered figures. The same can be
said for $\alpha$ and $\rho$ values.

\begin{table}[h]
\centering
\begin{tabular}{c c  >{$}l<{$} >{$}l<{$} >{$}l<{$}}
\toprule
\multicolumn{2}{c}{Figures to be compared} & \alpha & \rho & d \\
\midrule
\ref{fig:hex} & \ref{fig:skewHex} & 0.8073 &  0.4689 & 0.6381 \\
\midrule
\ref{fig:paraAndTri1} & \ref{fig:paraAndTri2} & 0.8073 &  0.7883 & 0.7978 \\
\midrule
\ref{fig:hexAndTri1} & \ref{fig:hexAndTri2} & 0.9281 & 0.9074 & 0.9177 \\
\midrule
\ref{fig:octAndQuad1} & \ref{fig:octAndQuad2} & 0.9201 & 0.7177 & 0.8189 \\
\bottomrule
\end{tabular}
\caption{Results obtained for a few pairs of figures.  See Appendices~\ref{appendix:Example1}, \ref{appendix:Example2}, \ref{appendix:Example3} and \ref{appendix:Example4} for more details regarding the tabulated results.}
\label{results}
\end{table}
\FloatBarrier


\bibliographystyle{amsplain} 
\bibliography{reference,new}

\appendix

\section{Computing $d$ for Figures~\ref{fig:hex} and \ref{fig:skewHex}}
\label{appendix:Example1}
\begin{figure}
\centering
\begin{subfigure}{0.35\textwidth}
\includegraphics[width=4cm]{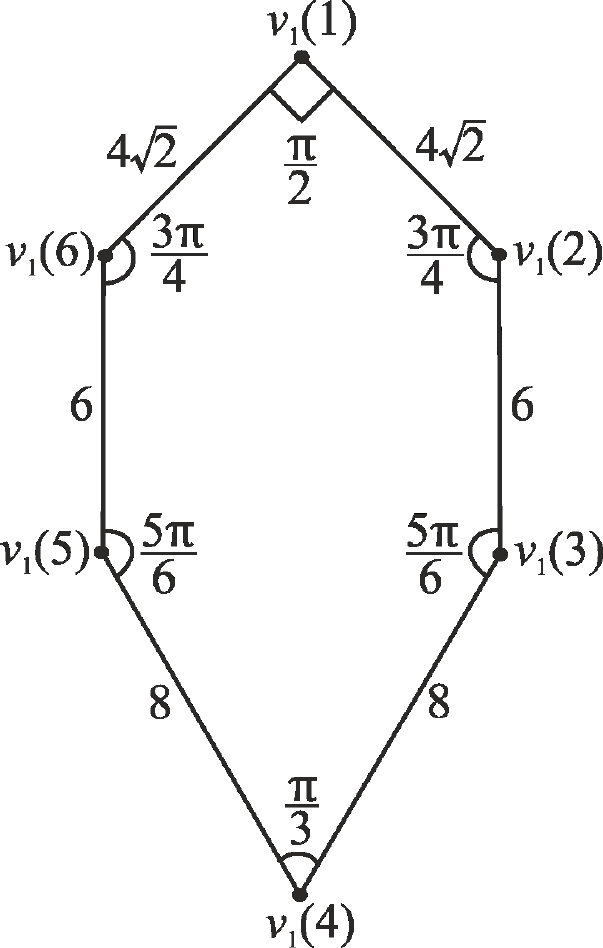}
\caption{$\bm{\gamma_1}$}
\label{fig:hex}
\end{subfigure}
\qquad 
\begin{subfigure}{0.55\textwidth}
\includegraphics[width=7.5cm]{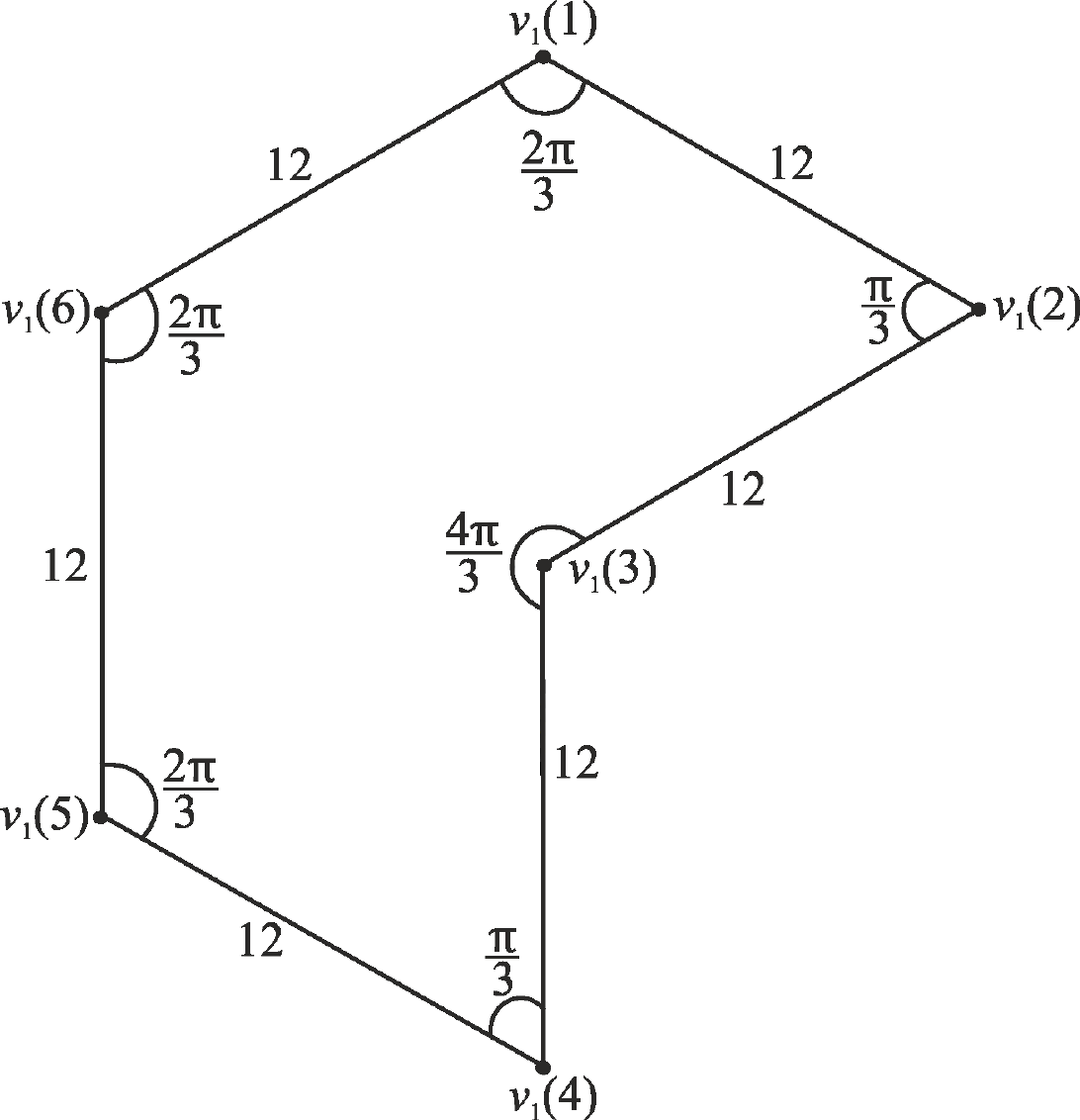}
\caption{$\bm{\gamma_2$}}
\label{fig:skewHex}
\end{subfigure}
\caption{}
\end{figure}

Computing $\alpha(\gamma_i, \gamma_j)$
\begin{flalign*}
\qquad & \Theta_1 = \left\{ \frac{\pi}{2}, \frac{3\pi}{4}, \frac{5\pi}{6}, \frac{\pi}{3}, \frac{5\pi}{6}, \frac{3\pi}{4} \right\} & \\
& \Theta_2 = \left\{ \frac{2\pi}{3}, \frac{\pi}{3}, \frac{4\pi}{3}, \frac{\pi}{3}, \frac{2\pi}{3}, \frac{2\pi}{3} \right\} & 
\end{flalign*}

Using \eqref{angularShiftEqn} we compute the Euclidean distance.
\begin{flalign*}
\qquad & \Lambda_{1,2}(1) = \frac{\pi}{6} \quad \Lambda_{1,2}(2) = \frac{5\pi}{12} \quad \Lambda_{1,2}(3) = \frac{\pi}{2} \quad \Lambda_{1,2}(4) = 0 \quad \Lambda_{1,2}(5) =\frac{\pi}{6} \quad \Lambda_{1,2}(6) = \frac{\pi}{12} & \\
& \sum_{u=1}^6 \Lambda_{1,2}(u) = \frac{4\pi}{3} & \\
& \bm{\alpha(\gamma_1,\gamma_2)} = 0.8073 \text{, using \eqref{alphaEqn}}&
\end{flalign*}

Figure~\ref{fig:Hex} shows the relation between the corresponding elements of $\Theta_1$ and $\Theta_2$. It also indicates, for each pair $\langle \theta_1(u), \theta_2(u) \rangle$ the corresponding point, $\langle \theta_1(u), \theta_1(u) \rangle$, on the $y=x$ line. Further, Table~\ref{tab:Hex} provides the legend for this figure.
\begin{figure}[h]
\centering
\includegraphics[width=0.8\textwidth]{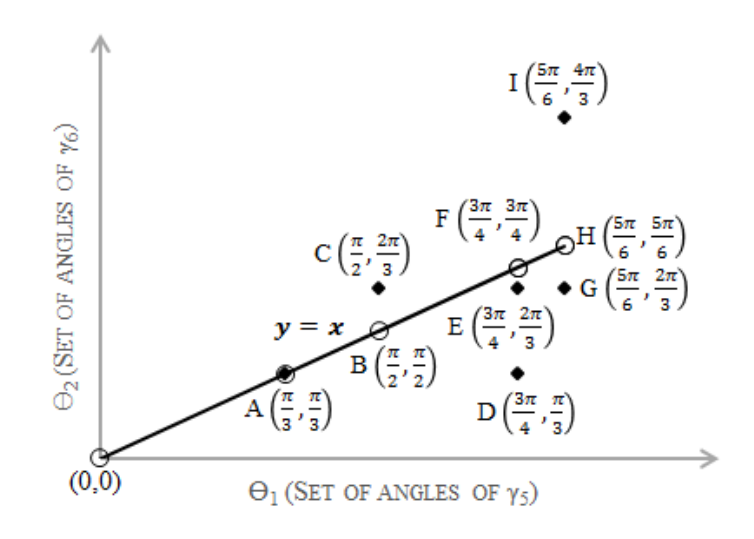}
\caption {\scriptsize Solid dots show the existing relation between $\Theta_1$ and $\Theta_2$, whereas the line passing through the hollow dots shows the expected relation between $\Theta_1$ and $\Theta_2$}
\label{fig:Hex}
\end{figure}
\FloatBarrier
\begin{table}[h]
\begin{tabular}{ >{$}l<{$} @{\;$\in$\;} >{$\{}l<{\}$} >{$}l<{$} @{\;$\in$\;} >{$\{}l<{\}$} }
A & \langle \theta_1(4), \theta_2(4) \rangle, \langle \theta_1(4), \theta_1(4) \rangle & B & \langle \theta_1(1), \theta_1(1) \rangle \\
C & \langle \theta_1(1), \theta_2(1) \rangle & D & \langle \theta_1(2), \theta_2(2) \rangle \\
E & \langle \theta_1(6), \theta_2(6) \rangle & F & \langle \theta_1(6), \theta_1(6) \rangle \\
G & \langle \theta_1(5), \theta_2(5) \rangle & H & \langle \theta_1(3), \theta_1(3) \rangle, \langle \theta_1(5), \theta_1(5) \rangle \\
I & \langle \theta_1(3), \theta_2(3) \rangle \\
\end{tabular}
\caption{Legend of Figure~\ref{fig:Hex}}
\label{tab:Hex}
\end{table}
\FloatBarrier

Computing $\rho(\gamma_i, \gamma_j)$
\begin{flalign*}
\qquad & L_1 = \{ 4\sqrt{2}, 4\sqrt{2}, 6, 8, 8, 6 \} & \\
& L_2 = \{ 12, 12, 12, 12, 12, 12 \} &
\end{flalign*}

Table~\ref{tab:Fig1And2} indicates the input and output of IPFP transformation. 

\begin{table}[h]
\resizebox{\textwidth}{!}{%
\begin{subtable}[h]{0.6\linewidth}\centering
{\begin{tabular}{llll}
\toprule
$h$ & $\bm{l_1(h)}$ & $\bm{l_2(h)}$ & {\sc total} \\ 
\midrule
1 & 5.6569 & 12 & 17.6569\\  
\midrule
2 & 5.6569 & 12 & 17.6569\\
\midrule
3 & 6 & 12 & 18\\
\midrule
4 & 8 & 12 & 20\\
\midrule
5 & 8 & 12 & 20\\
\midrule
6 & 6 & 12 & 18\\
\midrule
{\sc total} & \bf{39.3138} & \bf{72} & \bf{111.3138} \\ 
\bottomrule
\end{tabular}}
\caption{Values on which IPFP is to be performed}
\label{tab:hexTable1}
\end{subtable}
\begin{subtable}[h]{0.6\linewidth}\centering
{\begin{tabular}{llll}
\toprule
$h$ & $\bm{l_1'(h)}$ & $\bm{l_2'(h)}$ & {\sc total}\\ 
\midrule
1 & 6.236 & 11.4208 & 17.6569\\  
\midrule
2 & 6.236 & 11.4208 & 17.6569\\  
\midrule
3 & 6.3572 & 11.6428 & 18\\
\midrule
4 & 7.0636 & 12.9364 & 20\\
\midrule
5 & 7.0636 & 12.9364 & 20\\
\midrule
6 & 6.3572 & 11.6428 & 18\\
\midrule
{\sc total} & \bf{39.3138} & \bf{72} & \bf{111.3138} \\ 
\bottomrule
\end{tabular}}
\caption{Values obtained on applying IPFP}
\label{tab:hexTable2}
\end{subtable}
}
\caption{IPFP Transformation}
\label{tab:Fig1And2}
\end{table}

\begin{flalign*}
\qquad & \text{Considering any row from the Table~\ref{tab:hexTable2}, we compute $m$.} & \\
& m = \frac{11.4208}{6.236} =1.8314 &\\
& y =  1.8314 x \text{, equation of the expected line} &
\end{flalign*}

Figure~\ref{fig:Hex-edge} shows the relation between the corresponding
elements of $L_1$ and $L_2$. It also indicates, for each pair,
$\langle l_1(h), l_2(h) \rangle$, the corresponding point, $\langle
l_1'(h), l_2'(h) \rangle$, on line $y=1.8314 x$ line. Further,
Table~\ref{tab:Hex-edge} provides the legend for this figure.

\begin{figure}[h!]
\centering
\includegraphics[width=0.8\textwidth]{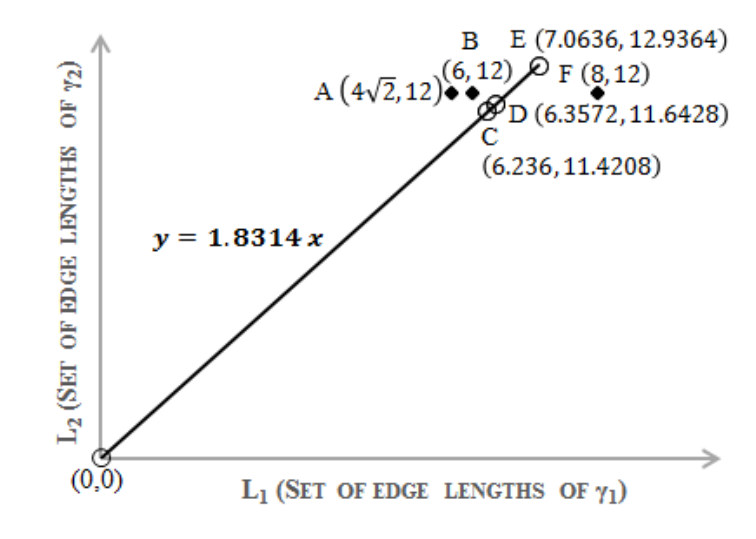}
\caption {\scriptsize Solid dots show the existing relation between $L_1$ and $L_2$, whereas the line passing through the hollow dots shows the expected relation between $L_1$ and $L_2$}
\label{fig:Hex-edge}
\end{figure}
\FloatBarrier
\begin{table}
\resizebox{\textwidth}{!}{%
\begin{tabular}{ >{$}l<{$} @{\;$\in$\;} >{$\{}l<{\}$} >{$}l<{$} @{\;$\in$\;} >{$\{}l<{\}$}}
A & \langle l_1(1), l_2(1) \rangle, \langle l_1(2), l_2(2) \rangle & B & \langle l_1(3), l_2(3) \rangle, \langle l_1(6), l_2(6) \rangle \\
C & \langle l_1'(1), l_2'(1) \rangle, \langle l_1'(2), l_2'(2) \rangle & D & \langle l_1'(3), l_2'(3) \rangle, \langle l_1'(6), l_2'(6) \rangle \\
E & \langle l_1'(4), l_2'(4) \rangle, \langle l_1'(5), l_2'(5) \rangle & F & \langle l_1(4), l_2(4) \rangle, \langle l_1(5), l_2(5) \rangle \\
\end{tabular}\newline
}
\caption{Legend of Figure~\ref{fig:Hex-edge}}
\label{tab:Hex-edge}
\end{table}
\FloatBarrier

Now, the Euclidean Distance is computed using \eqref{euclideanEqn} 

\begin{tabular}{ >{$}l<{$} >{$}l<{$} >{$}l<{$} }
\Delta_{1,2}(1) = 0.8191  & \Delta_{1,2}(2) = 0.8191  & \Delta_{1,2}(3) = 0.5052 \\ 
\Delta_{1,2}(4) = 1.3243  & \Delta_{1,2}(5) = 1.3243  & \Delta_{1,2}(6) = 0.5052  \\  
\end{tabular}
\begin{flalign*}
\qquad & \sum_{h=1}^6 \Delta_{1,2}(h) = 5.2971 & \\
& \bm{\rho(\gamma_1,\gamma_2)} =0.4689 \text{, using \eqref{rhoEqn}} &
\end{flalign*}

Computing $d(\gamma_i, \gamma_j)$
\begin{flalign*}
\qquad & \bm{d(\gamma_1,\gamma_2)} = 0.6381 \text{, with } \beta = 0.5 \text{, using \eqref{dDefinitionEqn}}& 
\end{flalign*}
\FloatBarrier

\section{Computing $d$ for Figures~\ref{fig:paraAndTri1} and \ref{fig:paraAndTri2}}
\label{appendix:Example2}
\begin{figure}[h]
\centering
\begin{subfigure}[b]{0.45\textwidth}
\includegraphics[width=8cm]{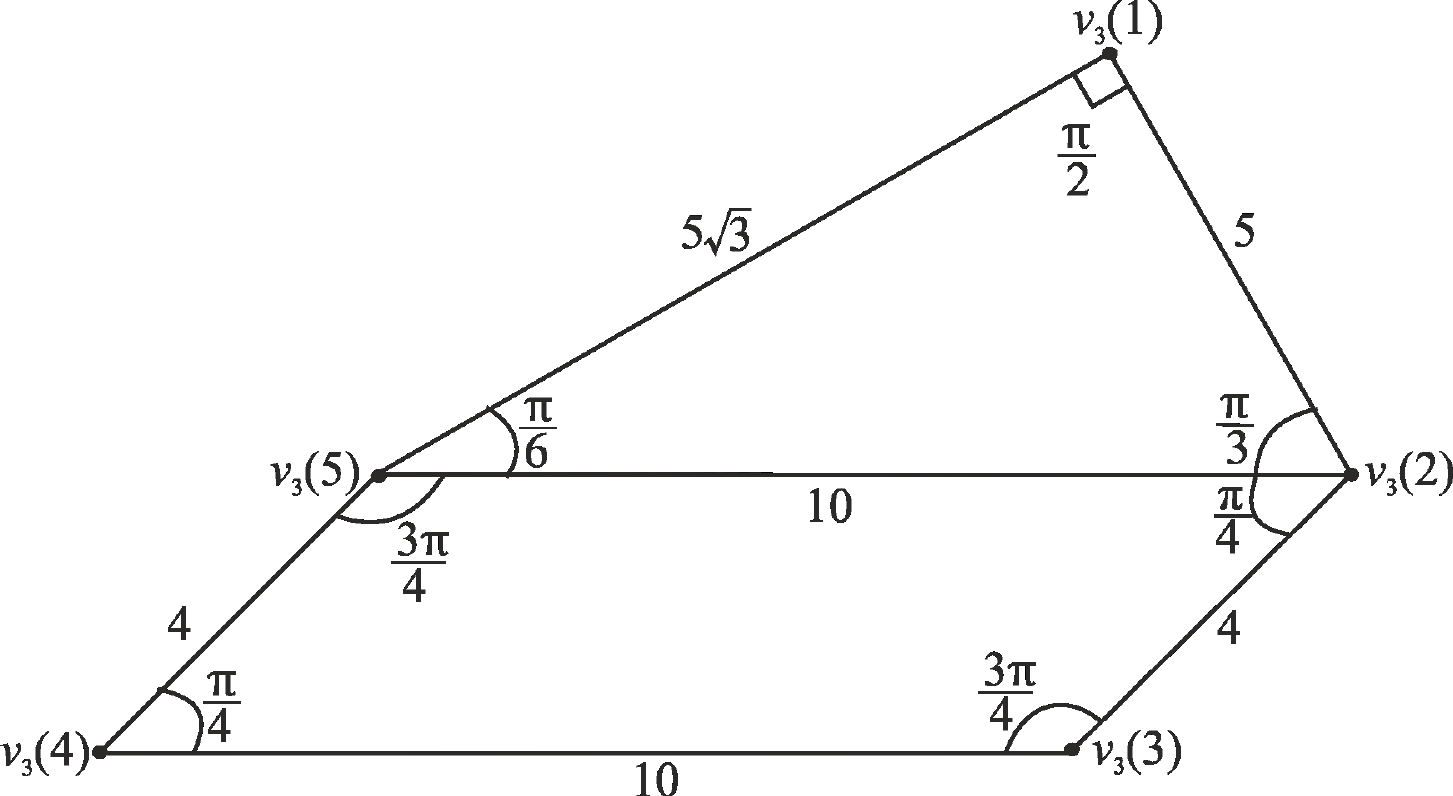}
\caption{$\bm{\gamma_3}$}
\label{fig:paraAndTri1}
\end{subfigure}
\hfill
\begin{subfigure}[b]{0.35\textwidth}
\includegraphics[width=4cm]{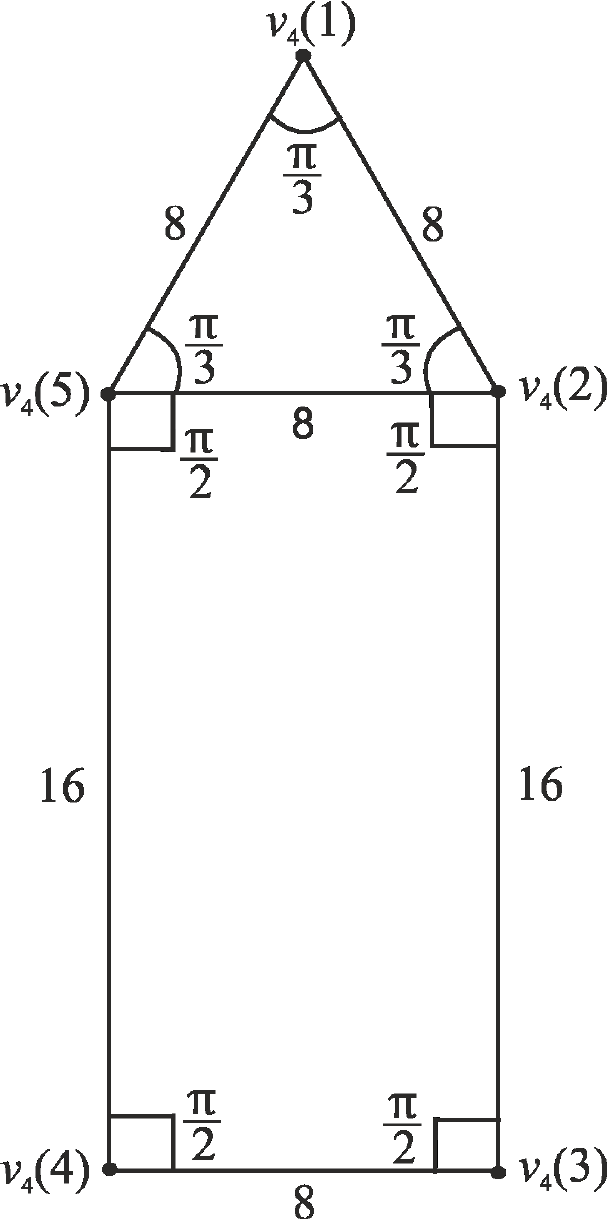}
\caption{$\bm{\gamma_4$}}
\label{fig:paraAndTri2}
\end{subfigure}
\caption{}
\end{figure}

Computing $\alpha(\gamma_3,\gamma_4)$
\begin{flalign*}
\qquad & \Theta_3 = \left\{ \frac{\pi}{2}, \frac{\pi}{3}, \frac{3\pi}{4}, \frac{\pi}{4}, \frac{3\pi}{4}, \frac{\pi}{6} \right\} & \\
& \Theta_4 = \left\{ \frac{\pi}{3}, \frac{\pi}{3}, \frac{\pi}{2}, \frac{\pi}{2}, \frac{\pi}{2}, \frac{\pi}{3} \right\} & \\
& \text{Using \eqref{angularShiftEqn} we compute the Euclidean distance.}& \\
& \Lambda_{3,4}(1) = \frac{\pi}{6} \quad \Lambda_{3,4}(2) = 0 \quad \Lambda_{3,4}(3) = \frac{\pi}{4}& \\
& \Lambda_{3,4}(4) =  \frac{\pi}{4} \quad \Lambda_{3,4}(5) = \frac{\pi}{4} \quad \Lambda_{3,4}(6) = \frac{\pi}{4}&\\ 
& \Lambda_{3,4}(7) = \frac{\pi}{6} &\\
& \sum_{u=1}^7 \Lambda_{3,4}(u) = \frac{4\pi}{3} &\\
& \bm{\alpha(\gamma_3,\gamma_4)} = 0.8073 \text{, using \eqref{alphaEqn}}&
\end{flalign*}

Figure~\ref{fig:Comp1} shows the relation between the corresponding
elements of $\Theta_3$ and $\Theta_4$. It also indicates, for each
pair $\langle \theta_3(u), \theta_4(u) \rangle$ the corresponding
point, $\langle \theta_3(u), \theta_3(u) \rangle$, on the $y=x$
line. Further, Table~\ref{tab:Comp1} provides the legend for this
figure.

\begin{figure}[h]
\centering
\includegraphics[width=0.6\textwidth]{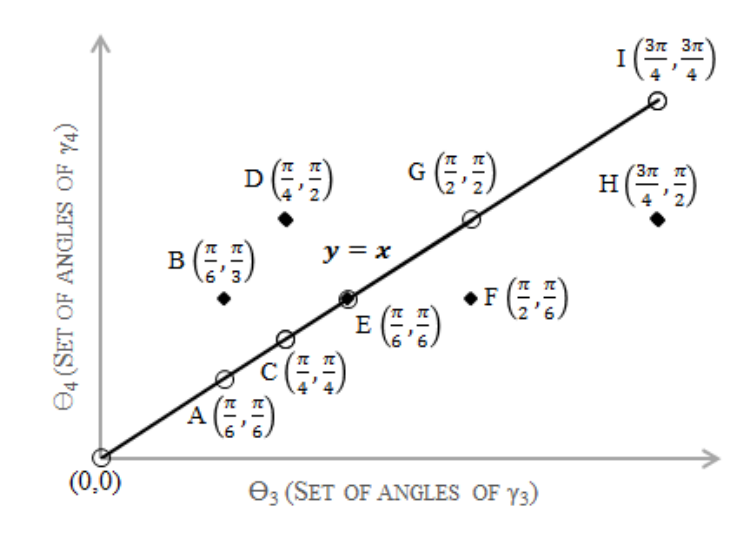}
\caption{\scriptsize Solid dots show the existing relation between $\Theta_3$ and $\Theta_4$, whereas the line passing through the hollow dots shows the expected relation between $\Theta_3$ and $\Theta_4$}
\label{fig:Comp1}
\end{figure}
\FloatBarrier
\begin{table}
\resizebox{\textwidth}{!}{%
\begin{tabular}{>{$}l<{$} @{\;$\in$\;} >{$\{}l<{\}$} >{$}l<{$} @{\;$\in$\;} >{$\{}l<{\}$} }
A & \langle \theta_3(7), \theta_3(7) \rangle & B & \langle \theta_3(7), \theta_4(7) \rangle \\
C & \langle \theta_3(3), \theta_3(3) \rangle, \langle \theta_3(5), \theta_3(5) \rangle & D & \langle \theta_3(3), \theta_4(3) \rangle, \langle \theta_3(5), \theta_4(5) \rangle \\
E & \langle \theta_3(2), \theta_4(2) \rangle, \langle \theta_3(2), \theta_3(2) \rangle & F & \langle \theta_3(1), \theta_4(1) \rangle \\
G & \langle \theta_3(1), \theta_3(1) \rangle & H & \langle \theta_3(4), \theta_4(4) \rangle, \langle \theta_3(6), \theta_4(6) \rangle \\
I & \langle \theta_3(4), \theta_3(4) \rangle, \langle \theta_3(6), \theta_3(6) \rangle \\
\end{tabular}
}
\caption{Legend of Figure~\ref{fig:Comp1}}
\label{tab:Comp1}
\end{table}
\FloatBarrier

Computing $\rho(\gamma_3,\gamma_4)$
\begin{flalign*}
\qquad &L_3 = \{5\sqrt(3), 5, 10, 4, 10, 4 \} & \\
&L_4 = \{ 8, 8, 8, 16, 8, 16 \} &
\end{flalign*}

Table~\ref{tab:Fig3And4} indicates the input and output of IPFP transformation. 

\begin{table}[h]
\resizebox{\textwidth}{!}{%
\begin{subtable}[h]{0.6\linewidth}
\centering
\begin{tabular}{llll}
\toprule
$h$ & $\bm{l_3(h)}$ & $\bm{l_4(h)}$ & {\sc total}\\ 
\midrule
1 & 8.6603 & 8 & 16.6603\\  
\midrule
2 & 5 & 8 & 13\\
\midrule
3 & 10 & 8 & 18\\
\midrule
4 & 4 & 16 & 20\\
\midrule
5 & 10 & 8 & 18\\
\midrule
6 & 4 & 16 & 20\\
\midrule
{\sc total} & \bf{41.6603} & \bf{64} & 105.6603 \\ 
\bottomrule
\end{tabular}
\caption{Values on which IPFP is to be performed}
\label{tab:Comp1Table1}
\end{subtable}%
\begin{subtable}[h]{0.6\linewidth}
\centering
\begin{tabular}{llll}
\toprule
$h$ & $\bm{l_3'(h)}$ & $\bm{l_4'(h)}$ & {\sc total}\\ 
\midrule
1 & 6.5689 & 10.0914 & 16.6603\\  
\midrule
2 & 5.1257 & 7.8743 & 13\\
\midrule
3 & 7.0971 & 10.9029 & 18\\
\midrule
4 & 7.8857 & 12.1143 & 20\\
\midrule
5 & 7.0971 & 10.9029 & 18\\
\midrule
6 & 7.8857 & 12.1143 & 20\\
\midrule
{\sc total} & \bf{41.6603} & \bf{64} & 105.6603 \\ 
\bottomrule
\end{tabular}
\caption{Values obtained on applying IPFP}
\label{tab:Comp1Table2}
\end{subtable}
}
\caption{IPFP Transformation}
\label{tab:Fig3And4}
\end{table}

\begin{flalign*}
\qquad & \text{Considering any row from the Table~\ref{tab:Comp1Table2}, we compute $m$} & \\
& m = \frac{10.0914}{6.5689} = 1.5362 &\\
& y =  1.5362 x \text{, equation of the expected line} &
\end{flalign*}

Figure~\ref{fig:Comp1-edge} shows the relation between the corresponding elements of $L_3$ and $L_4$. It also indicates, for each pair $\langle l_3(h), l_4(h) \rangle$ the corresponding point, $\langle l_3'(h), l_4'(h) \rangle$, on the $y=1.5362 x$ line. Further, Table~\ref{tab:Comp1-edge} provides the legend for this figure.

\begin{figure}[h]
\centering
\includegraphics[width=0.6\textwidth]{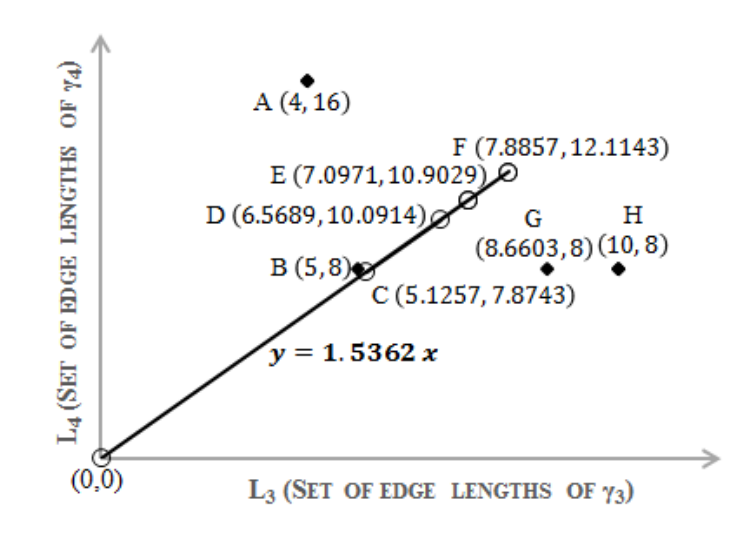}
\caption{\scriptsize Solid dots show the existing relation between $L_3$ and $L_4$, whereas the line passing through the hollow dots shows the expected relation between $L_3$ and $L_4$}
\label{fig:Comp1-edge}
\end{figure}
\FloatBarrier
\begin{table}
\begin{tabular}{>{$}l<{$} @{\;$\in$\;} >{$\{}l<{\}$} >{$}l<{$} @{\;$\in$\;} >{$\{}l<{\}$} }
A & \langle l_3(4), l_4(4) \rangle, \langle l_3(6), l_4(6) \rangle & B & \langle l_3(2), l_4(2) \rangle \\
C & \langle l_3'(2), l_4'(2) \rangle & D & \langle l_3'(1), l_4'(1) \rangle \\
E & \langle l_3'(3), l_4'(3) \rangle, \langle l_3'(5), l_4'(5) \rangle & F & \langle l_3'(4), l_4'(4) \rangle, \langle l_3'(6), l_4'(6) \rangle \\
G & \langle l_3(1), l_4(1) \rangle & H & \langle l_3(3), l_4(3) \rangle, \langle l_3(5), l_4(5) \rangle \\
\end{tabular}\newline
\caption{Legend of Figure~\ref{fig:Comp1-edge}}
\label{tab:Comp1-edge}
\end{table}
\FloatBarrier

Now, the Euclidean Distance is computed using \eqref{euclideanEqn} 

\begin{tabular}{ >{$}l<{$} >{$}l<{$} >{$}l<{$} }
\Delta_{3,4}(1) = 2.9577  & \Delta_{3,4}(2) = 0.1778  & \Delta_{3,4}(3) = 4.1053 \\ 
\Delta_{3,4}(4) = 5.4952  & \Delta_{3,4}(5) = 4.1053  & \Delta_{3,4}(6) = 5.4952  \\  
\end{tabular}
\begin{flalign*}
\qquad & \sum_{h=1}^6 \Delta_{3,4}(h) = 22.3365 & \\
& \bm{\rho(\gamma_3,\gamma_4)} = 0.7883 \text{, using \eqref{rhoEqn}}&
\end{flalign*}

Computing $d(\gamma_3,\gamma_4)$
\begin{flalign*}
\qquad & \bm{d(\gamma_3,\gamma_4)} = 0.7978 \text{, with } \beta = 0.5 \text{, using \eqref{dDefinitionEqn}}&
\end{flalign*}
\FloatBarrier

\section{Computing $d$ for Figures~\ref{fig:hexAndTri1} and \ref{fig:hexAndTri2}}
\label{appendix:Example3}
\begin{figure}[h]
\centering
\begin{subfigure}[b]{0.45\textwidth}
\includegraphics[width=8cm]{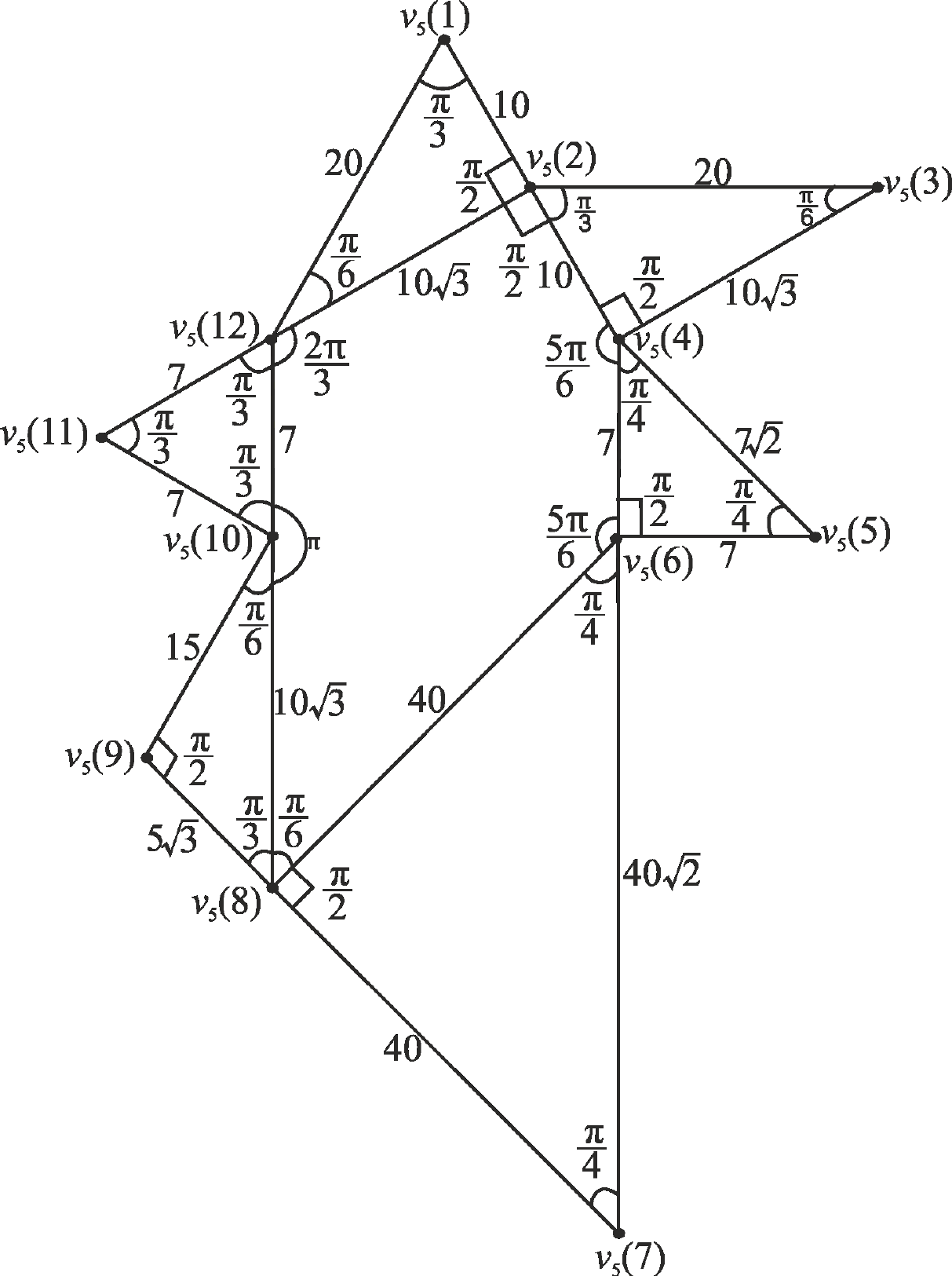}
\caption{$\bm{\gamma_5}$}
\label{fig:hexAndTri1}
\end{subfigure}
\hfill
\begin{subfigure}[b]{0.45\textwidth}
\includegraphics[width=8cm]{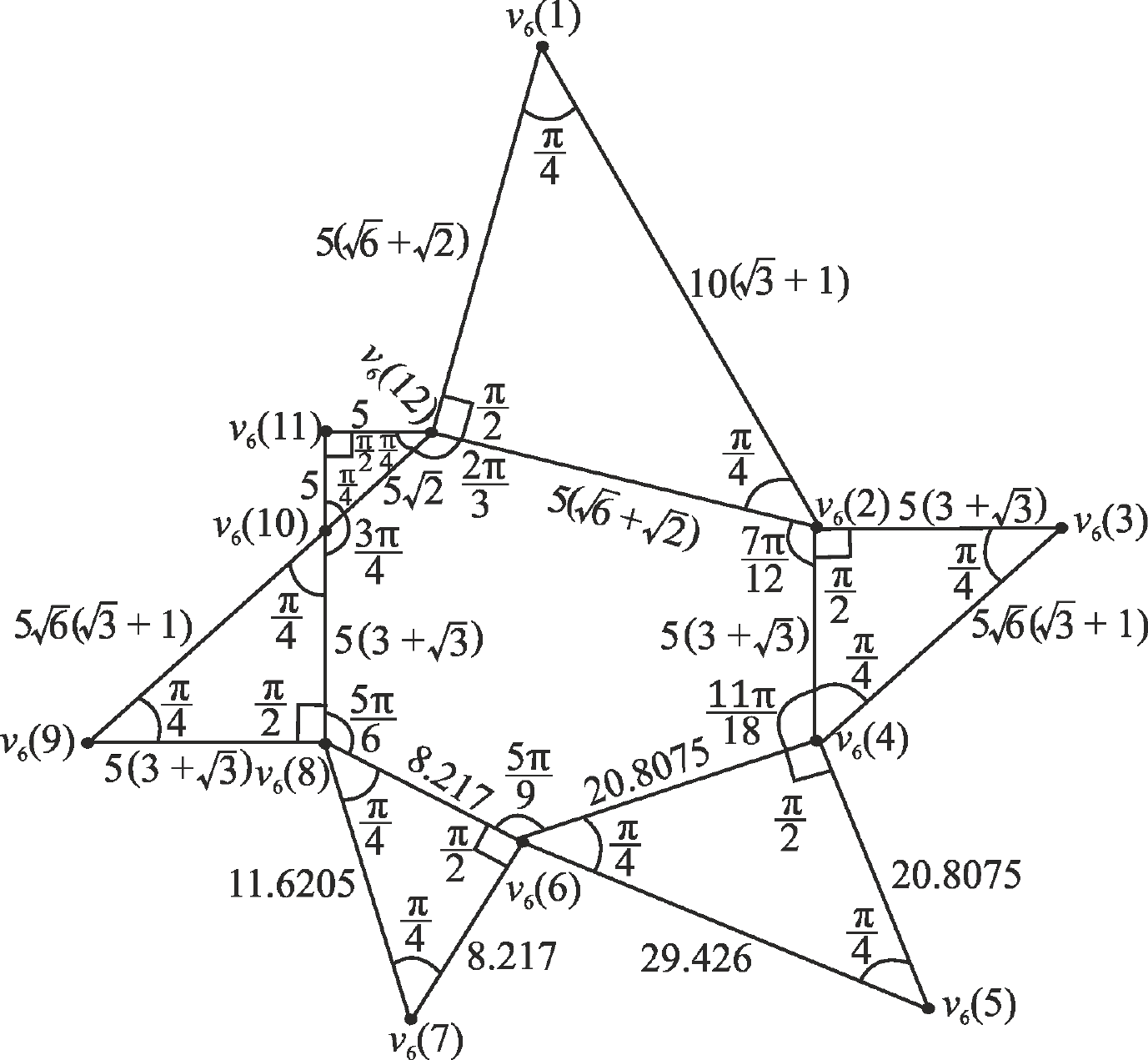}
\caption{$\bm{\gamma_6$}}
\label{fig:hexAndTri2}
\end{subfigure}
\caption{}
\end{figure}

Computing $\alpha(\gamma_5,\gamma_6)$, we get:

\begin{flalign*}
\tiny
\qquad & \Theta_5 = \left\{ \frac{\pi}{3}, \frac{\pi}{2}, \frac{\pi}{2}, \frac{\pi}{3}, \frac{\pi}{6}, \frac{\pi}{2}, \frac{5\pi}{6}, \frac{\pi}{4}, \frac{\pi}{4}, \frac{\pi}{2}, \frac{5\pi}{6}, \frac{\pi}{4}, \frac{\pi}{4}, \frac{\pi}{2}, \frac{\pi}{6}, \frac{\pi}{3}, \frac{\pi}{2}, \frac{\pi}{6}, \pi, \frac{\pi}{3}, \frac{\pi}{3}, \frac{\pi}{3}, \frac{2\pi}{3}, \frac{\pi}{6} \right\} & \\
& \Theta_6 = \left\{ \frac{\pi}{2}, \frac{\pi}{4}, \frac{2\pi}{3}, \frac{\pi}{2}, \frac{\pi}{4}, \frac{\pi}{4}, \frac{7\pi}{12}, \frac{\pi}{2}, \frac{\pi}{4}, \frac{\pi}{4}, \frac{11\pi}{18}, \frac{\pi}{2}, \frac{\pi}{4}, \frac{\pi}{4}, \frac{5\pi}{9}, \frac{\pi}{2}, \frac{\pi}{4}, \frac{\pi}{4}, \frac{5\pi}{6}, \frac{\pi}{2}, \frac{\pi}{4}, \frac{\pi}{4}, \frac{3\pi}{4}, \frac{\pi}{4} \right\} & 
\end{flalign*}

Using \eqref{angularShiftEqn} we compute the Euclidean distance.

\resizebox{\textwidth}{!}{%
\begin{tabular}{ >{$}l<{$} @{\;=\;} >{$}l<{$} >{$}l<{$} @{\;=\;} >{$}l<{$} >{$}l<{$} @{\;=\;} >{$}l<{$} >{$}l<{$} @{\;=\;} >{$}l<{$} >{$}l<{$} @{\;=\;} >{$}l<{$} >{$}l<{$} @{\;=\;} >{$}l<{$}}
\Lambda_{3,4}(1) & \frac{\pi}{6} & \Lambda_{3,4}(2) & \frac{\pi}{4} & \Lambda_{3,4}(3) & \frac{\pi}{6} &  \Lambda_{3,4}(4) & \frac{\pi}{6} & \Lambda_{3,4}(5) & \frac{\pi}{12} & \Lambda_{3,4}(6) & \frac{\pi}{4} \\ [1ex]
\Lambda_{3,4}(7) & \frac{\pi}{4} & \Lambda_{3,4}(8) & \frac{\pi}{4} & \Lambda_{3,4}(9) & 0 &  \Lambda_{3,4}(10) & \frac{\pi}{4} & \Lambda_{3,4}(11) & \frac{2\pi}{9} & \Lambda_{3,4}(12) & \frac{\pi}{4} \\ [1ex]
\Lambda_{3,4}(13) & 0 & \Lambda_{3,4}(14) & \frac{\pi}{4} & \Lambda_{3,4}(15) & \frac{7\pi}{18} &  \Lambda_{3,4}(16) & \frac{\pi}{6} & \Lambda_{3,4}(17) & \frac{\pi}{4} & \Lambda_{3,4}(18) & \frac{\pi}{12} \\ [1ex]
\Lambda_{3,4}(19) & \frac{\pi}{6} & \Lambda_{3,4}(20) & \frac{\pi}{6} & \Lambda_{3,4}(21) & \frac{\pi}{12} &  \Lambda_{3,4}(22) & \frac{\pi}{12} & \Lambda_{3,4}(23) & \frac{\pi}{12} & \Lambda_{3,4}(24) & \frac{\pi}{12}\\
\end{tabular}
}

\begin{flalign*}
\qquad & \sum_{u=1}^{24} \Lambda_{5,6}(u) = \frac{37\pi}{9} &\\
& \bm{\alpha(\gamma_5,\gamma_6)} = 0.9281 \text{, using \eqref{alphaEqn}}&
\end{flalign*}

Figure~\ref{fig:Comp2} shows the relation between the corresponding elements of $\Theta_5$ and $\Theta_6$. It also indicates, for each pair $\langle \theta_5(u), \theta_6(u) \rangle$ the corresponding point, $\langle \theta_5(u), \theta_5(u) \rangle$, on the $y=x$ line. Further, Table~\ref{tab:Comp2} provides the legend for this figure.

\begin{figure}[h]
\centering
\includegraphics[width=0.6\textwidth]{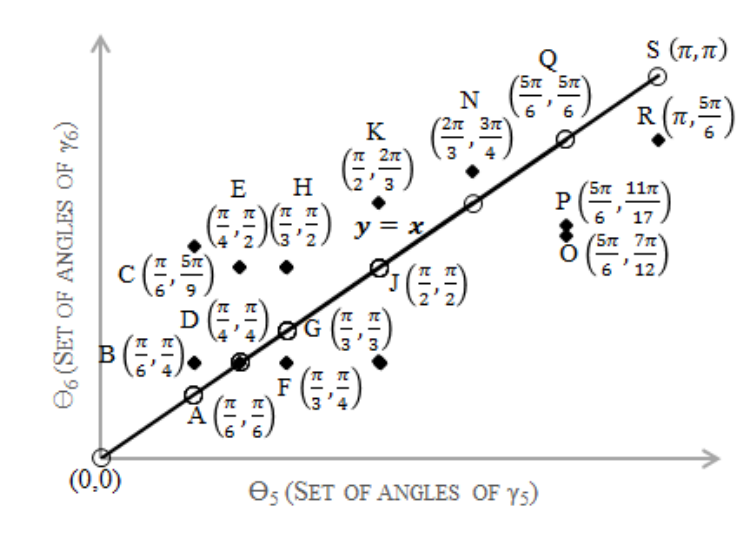}
\caption{\scriptsize Solid dots show the existing relation between $\Theta_5$ and $\Theta_6$, whereas the line passing through the hollow dots shows the expected relation between $\Theta_5$ and $\Theta_6$}
\label{fig:Comp2}
\end{figure}
\FloatBarrier 
\begin{table}
\resizebox{\textwidth}{!}{%
\begin{tabulary}{15cm}{ >{$}L<{$} @{\;$\in$\;} >{$\{}L<{\}$} >{$}L<{$} @{\;$\in$\;} >{$\{}L<{\}$}}
A & \langle \theta_5(5), \theta_5(5) \rangle, \langle \theta_5(15), \theta_5(15) \rangle, \newline \langle \theta_5(18), \theta_5(18) \rangle, \langle \theta_5(24), \theta_5(24) \rangle & B & \langle \theta_5(5), \theta_6(5) \rangle, \langle \theta_5(18), \theta_6(18) \rangle, \langle \theta_5(24), \theta_6(24) \rangle \\
C & \langle \theta_5(15), \theta_6(15) \rangle & D & \langle \theta_5(8), \theta_5(8) \rangle, \langle \theta_5(9), \theta_6(9) \rangle, \langle \theta_5(9), \theta_5(9) \rangle, \newline \langle \theta_5(12), \theta_5(12) \rangle, \langle  \theta_5(13), \theta_6(13)\rangle, \langle \theta_5(13), \theta_5(13) \rangle \\
E & \langle \theta_5(8), \theta_6(8) \rangle, \langle \theta_5(12), \theta_6(12) \rangle & F & \langle \theta_5(21), \theta_6(21) \rangle, \langle \theta_5(22), \theta_6(22) \rangle \\
G & \langle \theta_5(1), \theta_5(1) \rangle, \langle \theta_5(4), \theta_5(4) \rangle, \newline \langle \theta_5(16), \theta_5(16) \rangle, \langle \theta_5(20), \theta_5(20) \rangle, \newline \langle \theta_5(21), \theta_5(21) \rangle, \langle \theta_5(22), \theta_5(22) \rangle & H & \langle \theta_5(1), \theta_6(1) \rangle, \langle \theta_5(4), \theta_6(4) \rangle, \langle \theta_5(16), \theta_6(16) \rangle, \newline \langle \theta_5(20), \theta_6(20) \rangle \\
I & \langle \theta_5(2), \theta_6(2) \rangle, \langle \theta_5(6), \theta_6(6) \rangle, \newline \langle \theta_5(10), \theta_6(10) \rangle, \langle \theta_5(14), \theta_6(14) \rangle, \newline \langle \theta_5(17), \theta_6(17) \rangle & J & \langle \theta_5(2), \theta_5(2) \rangle, \langle \theta_5(3), \theta_5(3) \rangle, \langle \theta_5(6), \theta_5(6) \rangle, \newline \langle \theta_5(10), \theta_5(10) \rangle, \langle \theta_5(14), \theta_5(14) \rangle, \langle \theta_5(17), \theta_5(17) \rangle \\
K & \langle \theta_5(3), \theta_6(3) \rangle & M & \langle \theta_5(23), \theta_5(23) \rangle \\
N & \langle \theta_5(23), \theta_6(23) \rangle & O & \langle \theta_5(7), \theta_6(7) \rangle \\
P & \langle \theta_5(11), \theta_6(11) \rangle & Q & \langle \theta_5(7), \theta_5(7) \rangle, \langle \theta_5(11), \theta_5(11) \rangle \\
R & \langle \theta_5(19), \theta_6(19) \rangle & S & \langle \theta_5(19), \theta_5(19) \rangle \\
\end{tabulary}
}
\caption{Legend of Figure~\ref{fig:Comp2}}   
\label{tab:Comp2}
\end{table}
\FloatBarrier

Computing $\rho(\gamma_5,\gamma_6)$
\begin{flalign*}
\qquad &L_5 = \{20, 10, 10\sqrt{3}, 20, 10\sqrt{3}, 10, 7\sqrt{2}, 7, 7, 40\sqrt{2}, 40, 40, 5\sqrt{3}, 15, 10\sqrt{3}, 7, 7, 7 \} \\
&L_4 = \{ 5, 5, 5\sqrt{2}, 5(\sqrt{6} + \sqrt{2}), 5\sqrt{2}, 5(\sqrt{6} + \sqrt{2}), 5(3 + \sqrt{3}), 5\sqrt{6}(\sqrt{3} + 1), 5(3 + \sqrt{3}), \\
& \qquad \qquad 20.8075, 29.426, 20.8075, 8.217, 11.6205, 8.217, 5(3 + \sqrt{3}), 5\sqrt{6}(\sqrt{3} + 1), 5(3 + \sqrt{3}) \} 
\end{flalign*}

Table~\ref{tab:Fig5And6} indicates the input and output of IPFP
transformation.

\begin{table}[h]
\resizebox{\textwidth}{!}{%
\begin{subtable}[h]{0.6\linewidth}
\centering
\begin{tabular}{llll}
\toprule
$h$ & $\bm{l_5(h)}$ & $\bm{l_6(h)}$ & {\sc total}\\ 
\midrule
1 & 20 & 5 & 25\\  
\midrule
2 & 10 & 5 & 15\\
\midrule
3 & 17.3205 & 7.0711 & 24.3916\\
\midrule
4 & 20 & 19.3185 & 39.3185\\
\midrule
5 & 17.3205 & 27.3205 & 44.641\\
\midrule
6 & 10 & 19.3185 & 29.3185\\
\midrule
7 & 9.8995 & 23.6603 & 33.5598\\
\midrule
8 & 7 & 33.4607 & 40.4607 \\
\midrule
9 & 7 & 23.6603 & 30.6603 \\
\midrule
10 & 56.5685 & 20.8075 & 77.376 \\
\midrule
11 & 40 & 29.426 & 69.426 \\
\midrule
12 & 40 & 20.8075 & 60.8075 \\
\midrule
13 & 8.6603 & 8.217 & 16.8773 \\
\midrule
14 & 15 & 11.6205 & 26.6205 \\
\midrule
15 & 17.3205 & 8.217 & 25.5375 \\
\midrule
16 & 7 & 23.6603 & 30.6603 \\
\midrule
17 & 7 & 33.4607 & 40.4607 \\
\midrule
18 & 7 & 23.6603 & 30.6603 \\
\midrule
{\sc total} & \bf{317.0898} & \bf{343.6864} & 660.7762 \\ 
\bottomrule
\end{tabular}
\caption{Values on which IPFP is to be performed}
\label{tab:Comp2Table1}
\end{subtable}%
\begin{subtable}[h]{0.6\linewidth}
\centering
\begin{tabular}{llll}
\toprule
$h$ & $\bm{l_5'(h)}$ & $\bm{l_6'(h)}$ & {\sc total}\\ 
\midrule
1 & 11.9969 & 13.0031 & 25\\  
\midrule
2 & 7.1981 & 7.8019 & 15\\
\midrule
3 & 11.7049 & 12.6867 & 24.3916\\
\midrule
4 & 18.868 & 20.4505 & 39.3185\\
\midrule
5 & 21.4221 & 23.2189 & 44.641\\
\midrule
6 & 14.0692 & 15.2493 & 29.3185\\
\midrule
7 & 16.1045 & 17.4553 & 33.5598\\
\midrule
8 & 19.4161 & 21.0446 & 40.4607 \\
\midrule
9 & 14.7131 & 15.9472 & 30.6603 \\
\midrule
10 & 37.1308 & 40.2452 & 77.376 \\
\midrule
11 & 33.3158 & 36.1102 & 69.426 \\
\midrule
12 & 29.18 & 31.6275 & 60.8075 \\
\midrule
13 & 8.099 & 8.7783 & 16.8773 \\
\midrule
14 & 12.7745 & 13.846 & 26.6205 \\
\midrule
15 & 12.2548 & 13.2827 & 25.5375 \\
\midrule
16 & 14.7131 & 15.9472 & 30.6603 \\
\midrule
17 & 19.4161 & 21.0446 & 40.4607 \\
\midrule
18 & 14.7131 & 15.9472 & 30.6603 \\
\midrule
{\sc total} & \bf{317.0898} & \bf{343.6864} & 660.7762 \\ 
\bottomrule
\end{tabular}
\caption{Values obtained on applying IPFP}
\label{tab:Comp2Table2}
\end{subtable}
}
\caption{IPFP Transformation}
\label{tab:Fig5And6}
\end{table}
	
\begin{flalign*}
\qquad & \text{Considering any row from the Table~\ref{tab:Comp2Table2}, we compute $m$} & \\
&  m = \frac{13.0031}{11.9969} = 1.0839 &\\
& y =  1.0839 x \text{, equation of the expected line} &
\end{flalign*}

Figure~\ref{fig:Comp2-edge} shows the relation between the
corresponding elements of $L_5$ and $L_6$. It also indicates, for each
pair $\langle l_5(h), l_6(h) \rangle$ the corresponding point,
$\langle l_5'(h), l_6'(h) \rangle$, on the $y=1.0839 x$ line. Further,
Table~\ref{tab:Comp2-edge} provides the legend for this figure.

\begin{figure}[h]
\centering
\includegraphics[width=0.6\textwidth]{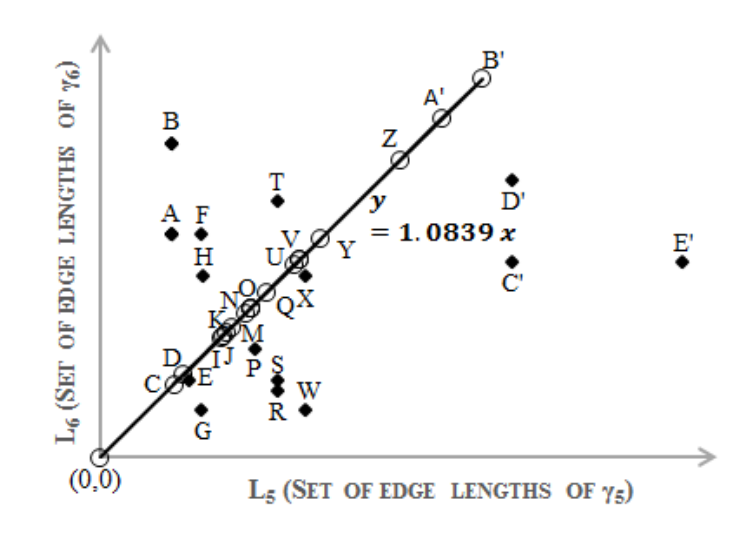}
\caption{\scriptsize Solid dots show the existing relation between $L_5$ and $L_6$, whereas the line passing through the hollow dots shows the expected relation between $L_5$ and $L_6$}
\label{fig:Comp2-edge}
\end{figure}
\FloatBarrier 
\begin{longtable}{ >{$}l<{$} >{$}l<{$} }
{\!\begin{aligned} 
               A &= \langle 7, 23.6603 \rangle  \\   
                  &\in \{ \langle l_5(9), l_6(9) \rangle, \langle l_5(16), l_6(16) \rangle, \\
                  & \quad \langle l_5(18), l_6(18) \rangle \} \end{aligned}} & {\!\begin{aligned} 
               B &= \langle 7, 33.4607 \rangle  \\   
                  &\in \{ \langle l_5(8), l_6(8) \rangle, \langle l_5(17), l_6(17) \rangle \} \end{aligned}} \\
{\!\begin{aligned} 
               C &= \langle 7.1981, 7.8019 \rangle  \\   
                  &\in \{ \langle l_5'(2), l_6'(2) \rangle \} \end{aligned}} & {\!\begin{aligned} 
               D &= \langle 8.099, 8.7783 \rangle  \\   
                  &\in \{ \langle l_5'(13), l_6'(13) \rangle \} \end{aligned}} \\
{\!\begin{aligned} 
               E &= \langle 8.6603, 8.217 \rangle  \\   
                  &\in \{ \langle l_5(13), l_6(13) \rangle \} \end{aligned}} & {\!\begin{aligned} 
               F &= \langle 9.8995, 23.6603 \rangle  \\   
                  &\in \{ \langle l_5(7), l_6(7) \rangle \} \end{aligned}} \\
{\!\begin{aligned} 
               G &= \langle 10, 5 \rangle  \\   
                  &\in \{ \langle l_5(2), l_6(2) \rangle \} \end{aligned}} & {\!\begin{aligned} 
               H &= \langle 10, 19.3185 \rangle  \\   
                  &\in \{ \langle l_5(6), l_6(6) \rangle \} \end{aligned}} \\
{\!\begin{aligned} 
               I &= \langle 11.7049, 12.6867 \rangle  \\   
                  &\in \{ \langle l_5'(3), l_6'(3) \rangle \} \end{aligned}} & {\!\begin{aligned} 
               J &= \langle 11.9969, 13.0031 \rangle  \\   
                  &\in \{ \langle l_5'(1), l_6'(1) \rangle \} \end{aligned}} \\
{\!\begin{aligned} 
               K &= \langle 12.2548, 13.2827 \rangle  \\   
                  &\in \{ \langle l_5'(15), l_6'(15) \rangle \} \end{aligned}} & {\!\begin{aligned} 
               M &= \langle 12.7745, 13.846 \rangle  \\   
                  &\in \{ \langle l_5'(14), l_6'(14) \rangle \} \end{aligned}} \\   
{\!\begin{aligned} 
               N &= \langle 14.0692, 15.2493 \rangle  \\   
                  &\in \{ \langle l_5'(6), l_6'(6) \rangle \} \end{aligned}} & {\!\begin{aligned} 
               O &= \langle 14.7131, 15.9472 \rangle  \\   
                  &\in \{ \langle l_5'(9), l_6'(9) \rangle, \langle l_5'(16), l_6'(16) \rangle, \\
                  & \quad \langle l_5'(18), l_6'(18) \rangle \} \end{aligned}} \\   
{\!\begin{aligned} 
               P &= \langle 15, 11.6205 \rangle  \\   
                  &\in \{ \langle l_5(14), l_6(14) \rangle \} \end{aligned}} & {\!\begin{aligned} 
               Q &= \langle 16.1045, 17.4553 \rangle  \\   
                  &\in \{ \langle l_5'(7), l_6'(7) \rangle \} \end{aligned}} \\      
{\!\begin{aligned} 
               R &= \langle 17.3205, 7.0711 \rangle  \\   
                  &\in \{ \langle l_5(3), l_6(3) \rangle \} \end{aligned}} & {\!\begin{aligned} 
               S &= \langle 17.3205, 8.217 \rangle  \\   
                  &\in \{ \langle l_5(15), l_6(15) \rangle \} \end{aligned}} \\                  
{\!\begin{aligned} 
               T &= \langle 17.3205, 27.3205 \rangle  \\   
                  &\in \{ \langle l_5(5), l_6(5) \rangle \} \end{aligned}} & {\!\begin{aligned} 
               U &= \langle 18.868, 20.4505 \rangle  \\   
                  &\in \{ \langle l_5'(4), l_6'(4) \rangle \} \end{aligned}} \\                  
{\!\begin{aligned} 
               V &= \langle 19.4161, 21.0446 \rangle  \\   
                  &\in \{ \langle l_5'(8), l_6'(8) \rangle, \langle l_5'(17), l_6'(17) \rangle \} \end{aligned}} & {\!\begin{aligned} 
               W &= \langle 20, 5 \rangle  \\   
                  &\in \{ \langle l_5(1), l_6(1) \rangle \} \end{aligned}} \\                  
{\!\begin{aligned} 
               X &= \langle 20, 19.3185 \rangle  \\   
                  &\in \{ \langle l_5(4), l_6(4) \rangle \} \end{aligned}} & {\!\begin{aligned} 
               Y &= \langle 21.4221, 23.2189 \rangle  \\   
                  &\in \{ \langle l_5'(5), l_6'(5) \rangle \} \end{aligned}} \\                 
{\!\begin{aligned} 
               Z &= \langle 29.18, 31.6275 \rangle  \\   
                  &\in \{ \langle l_5'(12), l_6'(12) \rangle \} \end{aligned}} & {\!\begin{aligned} 
               A' &= \langle 33.3158, 36.1102 \rangle  \\   
                  &\in \{ \langle l_5'(11), l_6'(11) \rangle \} \end{aligned}} \\  
{\!\begin{aligned} 
               B' &= \langle 37.1308, 40.2452 \rangle  \\   
                  &\in \{ \langle l_5'(10), l_6'(10) \rangle \} \end{aligned}} & {\!\begin{aligned} 
               C' &= \langle 40, 20.8075 \rangle  \\   
                  &\in \{ \langle l_5(12), l_6(12) \rangle \} \end{aligned}} \\  
{\!\begin{aligned} 
               D' &= \langle 40, 29.426 \rangle  \\   
                  &\in \{ \langle l_5(11), l_6(11) \rangle \} \end{aligned}} & {\!\begin{aligned} 
               E' &= \langle 56.5685, 20.8075 \rangle  \\   
                  &\in \{ \langle l_5(10), l_6(10) \rangle \} \end{aligned}} \\                       
\caption{Legend of Figure~\ref{fig:Comp2-edge}}   
\label{tab:Comp2-edge}
\end{longtable}
\FloatBarrier 

Now, the Euclidean Distance is computed using \eqref{euclideanEqn} 

\begin{tabular}{ >{$}l<{$} @{\;=\;} >{$}l<{$} >{$}l<{$} @{\;=\;} >{$}l<{$} >{$}l<{$} @{\;=\;} >{$}l<{$} >{$}l<{$} @{\;=\;} >{$}l<{$}}
\Delta_{5,6}(1) & 11.3181 & \Delta_{5,6}(2) & 3.9625 & \Delta_{5,6}(3) & 7.9417 & \Delta_{5,6}(4) & 1.6009 \\ 
\Delta_{5,6}(5) & 5.8005 & \Delta_{5,6}(6) & 5.7547 & \Delta_{5,6}(7) & 8.7752 & \Delta_{5,6}(8) & 17.5589  \\  
\Delta_{5,6}(9) & 10.9079 & \Delta_{5,6}(10) & 27.4891 & \Delta_{5,6}(11) & 9.4529 & \Delta_{5,6}(12) & 15.3018 \\
\Delta_{5,6}(13) & 0.7938 & \Delta_{5,6}(14) & 3.1473 & \Delta_{5,6}(15) & 7.164 & \Delta_{5,6}(16) & 10.9079\\
\Delta_{5,6}(17) & 17.5589 & \Delta_{5,6}(18) & 10.9079  \\
\end{tabular}
\begin{flalign*}
\qquad & \sum_{h=1}^{18} \Delta_{5,6}(h) = 176.3443 & \\
& \bm{\rho(\gamma_5,\gamma_6)} = 0.9074 \text{, using \eqref{rhoEqn}}&
\end{flalign*}

Computing $d(\gamma_5,\gamma_6)$
\begin{flalign*}
\qquad & \bm{d(\gamma_5,\gamma_6)} = 0.9177 \text{, with } \beta = 0.5 \text{, using \eqref{dDefinitionEqn}}&
\end{flalign*}
\FloatBarrier

\section{IPFP: step-by-step}
\label{appendix:Example4}
Consider Figures~\ref{fig:octAndQuad1} and \ref{fig:octAndQuad2}

\begin{figure}[h]
\centering
\begin{subfigure}[b]{0.45\textwidth}
\includegraphics[width=7.5cm]{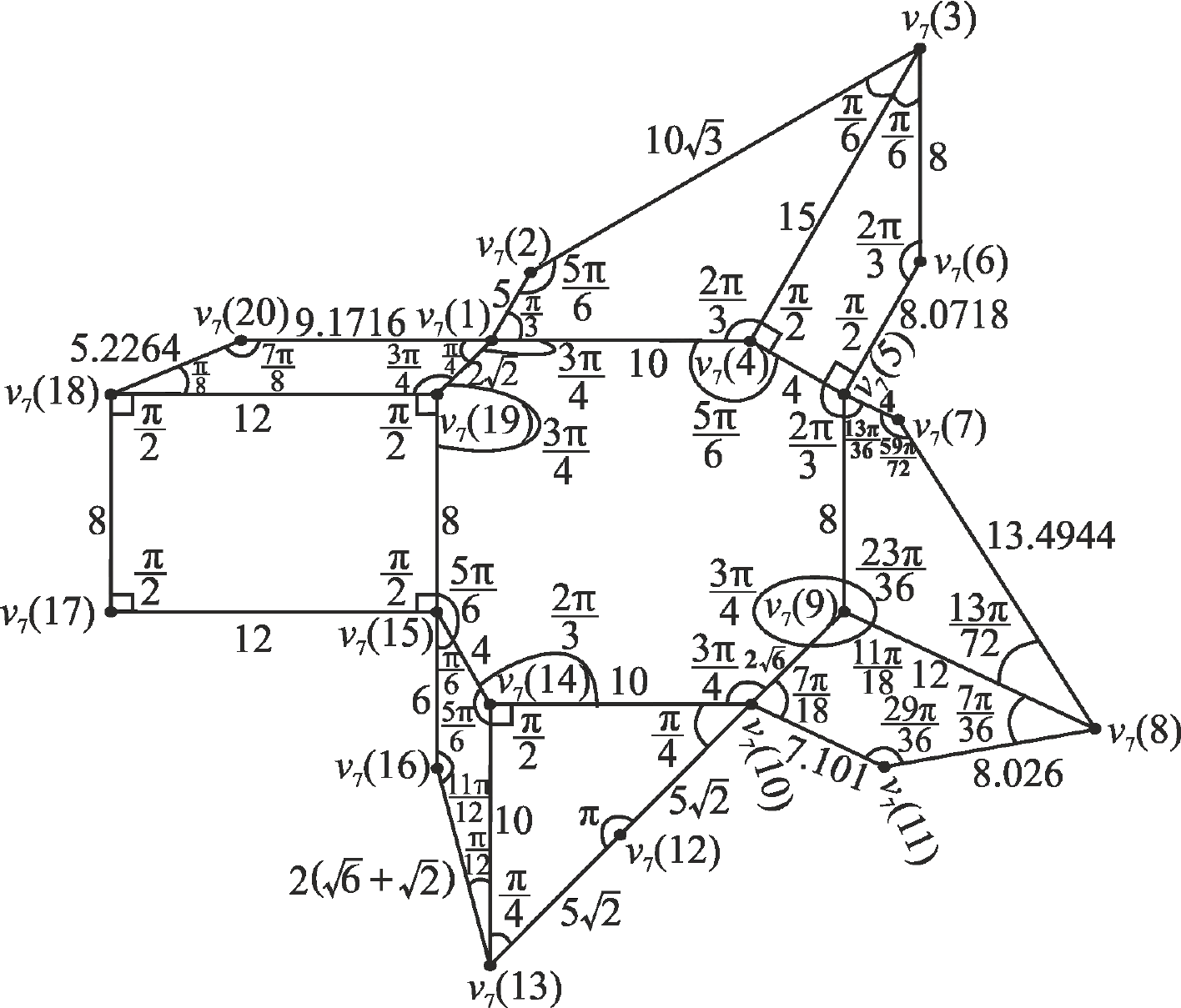}
\caption{$\bm{\gamma_7}$}
\label{fig:octAndQuad1}
\end{subfigure}
\hfill
\begin{subfigure}[b]{0.45\textwidth}
\includegraphics[width=8cm]{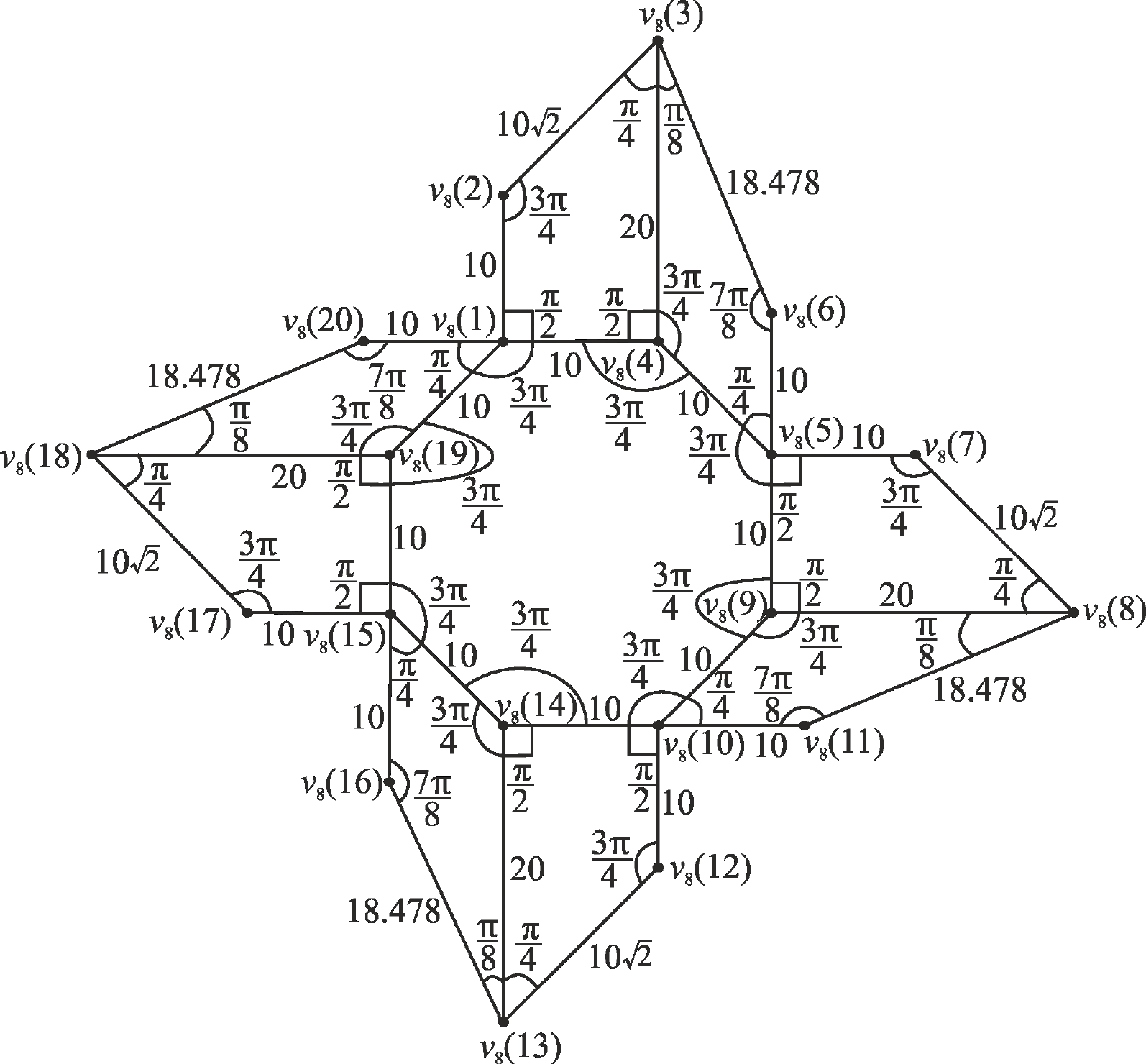}
\caption{$\bm{\gamma_8$}}
\label{fig:octAndQuad2}
\end{subfigure}
\caption{}
\end{figure}

Computing $\alpha(\gamma_7,\gamma_8)$
\begin{flalign*}
\qquad & \Theta_7 = \left\{ \frac{\pi}{3}, \frac{5\pi}{6}, \frac{\pi}{6}, \frac{2\pi}{3}, \frac{5\pi}{6}, \frac{\pi}{2}, \frac{\pi}{6}, \frac{2\pi}{3}, \frac{\pi}{2}, \frac{2\pi}{3}, \frac{13\pi}{36}, \frac{59\pi}{72}, \frac{13\pi}{72}, \frac{23\pi}{36}, \frac{3\pi}{4}, \frac{11\pi}{18}, \frac{7\pi}{36}, \frac{29\pi}{36}, \right. \\
& \left. \frac{7\pi}{18}, \frac{3\pi}{4}, \frac{\pi}{4}, \pi, \frac{\pi}{4}, \frac{\pi}{2}, \frac{2\pi}{3}, \frac{5\pi}{6}, \frac{\pi}{12}, \frac{11\pi}{12}, \frac{\pi}{6}, \frac{5\pi}{6}, \frac{\pi}{2}, \frac{\pi}{2}, \frac{\pi}{2}, \frac{\pi}{2}, \frac{3\pi}{4}, \frac{3\pi}{4}, \frac{\pi}{8}, \frac{7\pi}{8}, \frac{\pi}{4}, \frac{3\pi}{4} \right\} & \\
& \Theta_8= \left\{ \frac{\pi}{2}, \frac{3\pi}{4}, \frac{\pi}{4}, \frac{\pi}{2}, \frac{3\pi}{4}, \frac{3\pi}{4}, \frac{\pi}{8}, \frac{7\pi}{8}, \frac{\pi}{4}, \frac{3\pi}{4}, \frac{\pi}{2}, \frac{3\pi}{4}, \frac{\pi}{4}, \frac{\pi}{2}, \frac{3\pi}{4}, \frac{3\pi}{4}, \frac{\pi}{8}, \frac{7\pi}{8}, \frac{\pi}{4}, \frac{3\pi}{4}, \frac{\pi}{2}, \right. \\
& \left.  \frac{3\pi}{4}, \frac{\pi}{4}, \frac{\pi}{2}, \frac{3\pi}{4}, \frac{3\pi}{4}, \frac{\pi}{8}, \frac{7\pi}{8}, \frac{\pi}{4}, \frac{3\pi}{4}, \frac{\pi}{2}, \frac{3\pi}{4}, \frac{\pi}{4}, \frac{\pi}{2}, \frac{3\pi}{4}, \frac{3\pi}{4}, \frac{\pi}{8}, \frac{7\pi}{8}, \frac{\pi}{4}, \frac{3\pi}{4} \right\} & \\
\end{flalign*}

Using \eqref{angularShiftEqn} we compute the Euclidean distance.

\begin{tabular}{ >{$}l<{$} @{\;=\;} >{$}l<{$} >{$}l<{$} @{\;=\;} >{$}l<{$} >{$}l<{$} @{\;=\;} >{$}l<{$} >{$}l<{$} @{\;=\;} >{$}l<{$} >{$}l<{$} @{\;=\;} >{$}l<{$}}
\Lambda_{7,8}(1) & \frac{\pi}{6} & \Lambda_{7,8}(2) & \frac{\pi}{12} & \Lambda_{7,8}(3) & \frac{\pi}{12} &  \Lambda_{7,8}(4) & \frac{\pi}{6} & \Lambda_{7,8}(5) & \frac{\pi}{12} \\ [1ex]
\Lambda_{7,8}(6) & \frac{\pi}{4} & \Lambda_{7,8}(7) & \frac{\pi}{24} & \Lambda_{7,8}(8) & \frac{5\pi}{24}  & \Lambda_{7,8}(9) & \frac{\pi}{4} &  \Lambda_{7,8}(10) & \frac{\pi}{12} \\ [1ex]
\Lambda_{7,8}(11) & \frac{5\pi}{36} & \Lambda_{7,8}(12) & \frac{5\pi}{72} & \Lambda_{7,8}(13) & \frac{5\pi}{72} & \Lambda_{7,8}(14) & \frac{5\pi}{36} & \Lambda_{7,8}(15) & 0 \\ [1ex]
\Lambda_{7,8}(16) & \frac{5\pi}{36} & \Lambda_{7,8}(17) & \frac{5\pi}{72} & \Lambda_{7,8}(18) & \frac{5\pi}{72} & \Lambda_{7,8}(19) & \frac{5\pi}{36} & \Lambda_{7,8}(20) & 0 \\ [1ex]
\Lambda_{7,8}(21) & \frac{\pi}{4} &  \Lambda_{7,8}(22) & \frac{\pi}{4} & \Lambda_{7,8}(23) & 0 & \Lambda_{7,8}(24) & 0 & \Lambda_{7,8}(25) & \frac{\pi}{12} \\ [1ex]
\Lambda_{7,8}(26) & \frac{\pi}{12} & \Lambda_{7,8}(27) & \frac{\pi}{24} & \Lambda_{7,8}(28) & \frac{\pi}{24} & \Lambda_{7,8}(29) & \frac{\pi}{12} & \Lambda_{7,8}(30) & \frac{\pi}{12} \\ [1ex]
\Lambda_{7,8}(31) & 0 & \Lambda_{7,8}(32) & \frac{\pi}{4} & \Lambda_{7,8}(33) & \frac{\pi}{4} & \Lambda_{7,8}(34) & 0 & \Lambda_{7,8}(35) & 0 \\ [1ex]
\Lambda_{7,8}(36) & 0 & \Lambda_{7,8}(37) & 0 & \Lambda_{7,8}(38) & 0 & \Lambda_{7,8}(39) & 0 & \Lambda_{7,8}(40) & 0 \\ 
\end{tabular}
\begin{flalign*}
\qquad & \sum_{u=1}^{40} \Lambda_{7,8}(u) = \frac{11\pi}{3} &\\
& \bm{\alpha(\gamma_7,\gamma_8)} = 0.9201 \text{, using \eqref{alphaEqn}}&
\end{flalign*}

Figure~\ref{fig:Comp3} shows the relation between the corresponding elements of $\Theta_7$ and $\Theta_8$. It also indicates, for each pair $\langle \theta_7(u), \theta_8(u) \rangle$ the corresponding point, $\langle \theta_7(u), \theta_7(u) \rangle$, on the $y=x$ line. Further, Table~\ref{tab:Comp3} provides the legend for this figure.
\begin{figure}[h]
\centering
\includegraphics[width=0.6\textwidth]{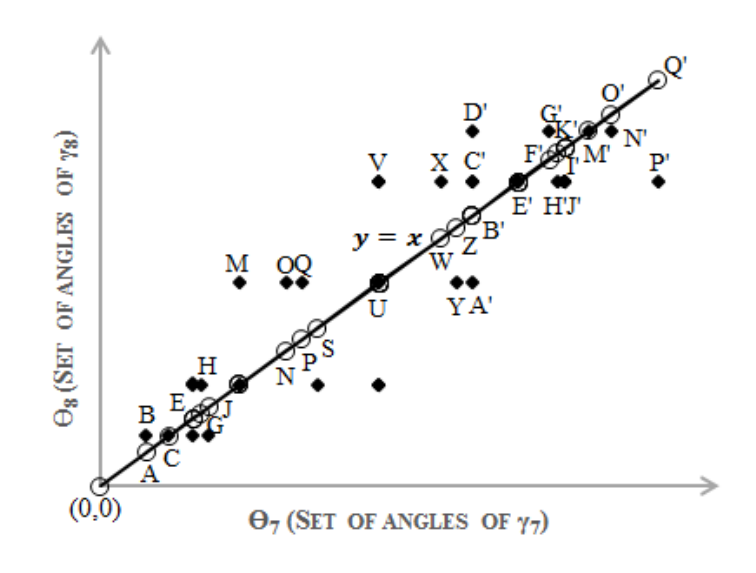}
\caption{\scriptsize Solid dots show the existing relation between $\Theta_7$ and $\Theta_8$, whereas the line passing through the hollow dots shows the expected relation between $\Theta_7$ and $\Theta_8$}
\label{fig:Comp3}
\end{figure}
\FloatBarrier
\begin{longtable}{ >{$}l<{$} >{$}l<{$} }
{\!\begin{aligned} 
               A &= \left \langle \frac{\pi}{12}, \frac{\pi}{12} \right \rangle  \\   
                  &\in \{ \langle \theta_7(27), \theta_7(27) \rangle \} \end{aligned}} & {\!\begin{aligned} 
               B &= \left \langle \frac{\pi}{12}, \frac{\pi}{8} \right \rangle  \\   
                  &\in \{ \langle \theta_7(27), \theta_8(27) \rangle \} \end{aligned}} \\
{\!\begin{aligned} 
               C &= \left \langle \frac{\pi}{8}, \frac{\pi}{8} \right \rangle  \\   
                  &\in \{ \langle \theta_7(37), \theta_8(37) \rangle, \langle \theta_7(37), \theta_7(37) \rangle \} \end{aligned}} & {\!\begin{aligned} 
               D &= \left \langle \frac{\pi}{6}, \frac{\pi}{8} \right \rangle  \\   
                  &\in \{ \langle \theta_7(7), \theta_8(7) \rangle \} \end{aligned}} \\
{\!\begin{aligned} 
               E &= \left \langle \frac{\pi}{6}, \frac{\pi}{6} \right \rangle  \\   
                  &\in \{ \langle \theta_7(3), \theta_7(3) \rangle, \langle (\theta_7(7), \theta_7(7) \rangle, \\
                  & \quad \langle \theta_7(29), \theta_7(29) \rangle \} \end{aligned}} & {\!\begin{aligned} 
               F &= \left \langle \frac{\pi}{6}, \frac{\pi}{4} \right \rangle  \\   
                  &\in \{ \langle \theta_7(3), \theta_8(3) \rangle, \langle \theta_7(29), \theta_8(29) \rangle \} \end{aligned}} \\
{\!\begin{aligned} 
               G &= \left \langle \frac{13\pi}{72}, \frac{13\pi}{72} \right \rangle  \\   
                  &\in \{ \langle \theta_7(13), \theta_7(13) \rangle \} \end{aligned}} & {\!\begin{aligned} 
               H &= \left \langle \frac{13\pi}{72}, \frac{\pi}{4} \right \rangle  \\   
                  &\in \{ \langle \theta_7(13), \theta_8(13) \rangle \} \end{aligned}} \\
{\!\begin{aligned} 
               I &= \left \langle \frac{7\pi}{36}, \frac{\pi}{8} \right \rangle  \\   
                  &\in \{ \langle \theta_7(17), \theta_8(17) \rangle \} \end{aligned}} & {\!\begin{aligned} 
               J &= \left \langle \frac{7\pi}{36}, \frac{7\pi}{36} \right \rangle  \\   
                  &\in \{ \langle \theta_7(17), \theta_7(17) \rangle \} \end{aligned}} \\             
{\!\begin{aligned} 
               K &= \left \langle \frac{\pi}{4}, \frac{\pi}{4} \right \rangle  \\   
                  &\in \{ \langle \theta_7(21), \theta_7(21) \rangle, \langle \theta_7(23), \theta_8(23) \rangle, \\
                  & \quad \langle \theta_7(23), \theta_7(23) \rangle, \langle \theta_7(39), \theta_8(39) \rangle, \\
                  & \quad \langle \theta_7(39), \theta_7(39) \rangle \} \end{aligned}} & {\!\begin{aligned} 
               M &= \left \langle \frac{\pi}{4}, \frac{\pi}{2} \right \rangle  \\   
                  &\in \{ \langle \theta_7(21), \theta_8(21) \rangle \} \end{aligned}} \\                      
{\!\begin{aligned} 
               N &= \left \langle \frac{\pi}{3}, \frac{\pi}{3} \right \rangle  \\   
                  &\in \{ \langle \theta_7(1), \theta_7(1) \rangle \} \end{aligned}} & {\!\begin{aligned} 
               O &= \left \langle \frac{\pi}{3}, \frac{\pi}{2} \right \rangle  \\   
                  &\in \{ \langle \theta_7(1), \theta_8(1) \rangle \} \end{aligned}} \\   
{\!\begin{aligned} 
              P &= \left \langle \frac{13\pi}{36}, \frac{13\pi}{36} \right \rangle  \\   
                  &\in \{ \langle \theta_7(11), \theta_7(11) \rangle \} \end{aligned}} & {\!\begin{aligned} 
              Q &= \left \langle \frac{13\pi}{36}, \frac{\pi}{2} \right \rangle  \\   
                  &\in \{ \langle \theta_7(11), \theta_8(11) \rangle \} \end{aligned}} \\                     
{\!\begin{aligned} 
              R &= \left \langle \frac{7\pi}{18}, \frac{\pi}{4} \right \rangle  \\   
                  &\in \{ \langle \theta_7(19), \theta_8(19) \rangle \} \end{aligned}} & {\!\begin{aligned} 
              S &= \left \langle \frac{7\pi}{18}, \frac{7\pi}{18} \right \rangle  \\   
                  &\in \{ \langle \theta_7(19), \theta_7(19) \rangle \} \end{aligned}} \\  
{\!\begin{aligned} 
              T &= \left \langle \frac{\pi}{2}, \frac{\pi}{4} \right \rangle  \\   
                  &\in \{ \langle \theta_7(9), \theta_8(9) \rangle, \langle \theta_7(33), \theta_8(33) \rangle \} \end{aligned}} & {\!\begin{aligned} 
              U &= \left \langle \frac{\pi}{2}, \frac{\pi}{2} \right \rangle  \\   
                  &\in \{ \langle \theta_7(6), \theta_7(6) \rangle, \langle \theta_7(9), \theta_7(9) \rangle, \langle \theta_7(24), \theta_8(24) \rangle, \\
                  & \quad \langle \theta_7(24), \theta_7(24) \rangle, \langle \theta_7(31), \theta_8(31) \rangle, \langle \theta_7(31), \theta_7(31 ) \rangle, \\
                  & \quad \langle \theta_7(32), \theta_7(32) \rangle, \langle \theta_7(33), \theta_7(33) \rangle, \langle \theta_7(34), \theta_8(34) \rangle, \\
                  & \quad \langle \theta_7(34), \theta_7(34) \rangle\} \end{aligned}} \\                 
{\!\begin{aligned} 
               V &= \left \langle \frac{\pi}{2}, \frac{3\pi}{4} \right \rangle  \\   
                  &\in \{ \langle \theta_7(6), \theta_8(6) \rangle, \langle \theta_7(32), \theta_8(32) \rangle \} \end{aligned}} & {\!\begin{aligned} 
               W &= \left \langle \frac{11\pi}{18}, \frac{11\pi}{18} \right \rangle  \\   
                  &\in \{ \langle \theta_7(16), \theta_7(16) \rangle \} \end{aligned}} \\           
{\!\begin{aligned} 
               X &= \left \langle \frac{11\pi}{18}, \frac{3\pi}{4} \right \rangle  \\   
                  &\in \{ \langle \theta_7(16), \theta_8(16) \rangle \} \end{aligned}} & {\!\begin{aligned} 
               Y &= \left \langle \frac{23\pi}{36}, \frac{\pi}{2} \right \rangle  \\   
                  &\in \{ \langle \theta_7(14), \theta_8(14) \rangle \} \end{aligned}} \\  
{\!\begin{aligned} 
               X &= \left \langle \frac{11\pi}{18}, \frac{3\pi}{4} \right \rangle  \\   
                  &\in \{ \langle \theta_7(16), \theta_8(16) \rangle \} \end{aligned}} & {\!\begin{aligned} 
               Y &= \left \langle \frac{23\pi}{36}, \frac{\pi}{2} \right \rangle  \\   
                  &\in \{ \langle \theta_7(14), \theta_8(14) \rangle \} \end{aligned}} \\     
{\!\begin{aligned} 
               Z &= \left \langle \frac{23\pi}{36}, \frac{23\pi}{36} \right \rangle  \\   
                  &\in \{ \langle \theta_7(14), \theta_7(14) \rangle \} \end{aligned}} & {\!\begin{aligned} 
               A' &= \left \langle \frac{2\pi}{3}, \frac{\pi}{2} \right \rangle  \\   
                  &\in \{ \langle \theta_7(4), \theta_8(4) \rangle \} \end{aligned}} \\  
{\!\begin{aligned} 
               B' &= \left \langle \frac{2\pi}{3}, \frac{2\pi}{3} \right \rangle  \\   
                  &\in \{ \langle \theta_7(4), \theta_7(4) \rangle, \langle \theta_7(8), \theta_7(8) \rangle, \\
                  & \quad \langle \theta_7(10), \theta_7(10) \rangle, \langle \theta_7(25), \theta_7(25) \rangle \} \end{aligned}} & {\!\begin{aligned} 
               C' &= \left \langle \frac{2\pi}{3}, \frac{3\pi}{4} \right \rangle  \\   
                  &\in \{ \langle \theta_7(10), \theta_8(10) \rangle, \langle \theta_7(25), \theta_8(25) \rangle \} \end{aligned}} \\                 
{\!\begin{aligned} 
               D' &= \left \langle \frac{2\pi}{3}, \frac{7\pi}{8} \right \rangle  \\   
                  &\in \{ \langle \theta_7(8), \theta_8(8) \rangle \} \end{aligned}} & {\!\begin{aligned} 
               E' &= \left \langle \frac{3\pi}{4}, \frac{3\pi}{4} \right \rangle  \\   
                  &\in \{ \langle \theta_7(15), \theta_8(15) \rangle, \langle \theta_7(15), \theta_7(15) \rangle, \langle \theta_7(20), \theta_8(20) \rangle, \\
                  & \quad \langle \theta_7(20), \theta_7(20) \rangle, \langle \theta_7(35), \theta_8(35) \rangle, \langle \theta_7(35), \theta_7(35) \rangle, \\
                  & \quad \langle \theta_7(36), \theta_8(36) \rangle, \langle \theta_7(36), \theta_7(36) \rangle, \langle \theta_7(40), \theta_8(40) \rangle, \\
                  & \quad \langle \theta_7(40), \theta_7(40) \rangle \} \end{aligned}} \\           
{\!\begin{aligned} 
               F' &= \left \langle \frac{29\pi}{36}, \frac{29\pi}{36} \right \rangle  \\   
                  &\in \{ \langle \theta_7(18), \theta_7(18) \rangle \} \end{aligned}} & {\!\begin{aligned} 
               G' &= \left \langle \frac{29\pi}{36}, \frac{7\pi}{8} \right \rangle  \\   
                  &\in \{ \langle \theta_7(18), \theta_8(18) \rangle \} \end{aligned}} \\  
{\!\begin{aligned} 
               H' &= \left \langle \frac{59\pi}{72}, \frac{3\pi}{4} \right \rangle  \\   
                  &\in \{ \langle \theta_7(12), \theta_8(12) \rangle \} \end{aligned}} & {\!\begin{aligned} 
               I' &= \left \langle \frac{59\pi}{72}, \frac{59\pi}{72} \right \rangle  \\   
                  &\in \{ \langle \theta_7(12), \theta_7(12) \rangle \} \end{aligned}} \\   
{\!\begin{aligned} 
               J' &= \left \langle \frac{5\pi}{6}, \frac{3\pi}{4} \right \rangle  \\   
                  &\in \{ \langle \theta_7(2), \theta_8(2) \rangle, \langle \theta_7(5), \theta_8(5) \rangle, \\
                  & \quad \langle \theta_7(26), \theta_8(26) \rangle, \langle \theta_7(30), \theta_8(30) \rangle \} \end{aligned}} & {\!\begin{aligned} 
               K' &= \left \langle \frac{5\pi}{6}, \frac{5\pi}{6} \right \rangle  \\   
                  &\in \{ \langle \theta_7(2), \theta_7(2) \rangle, \langle \theta_7(5), \theta_7(5) \rangle, \langle \theta_7(26), \theta_7(26) \rangle, \\
                  & \quad \langle \theta_7(30), \theta_7(30) \rangle \} \end{aligned}} \\          
{\!\begin{aligned} 
               M' &= \left \langle \frac{7\pi}{8}, \frac{7\pi}{8} \right \rangle  \\   
                  &\in \{ \langle \theta_7(38), \theta_8(38) \rangle, \langle \theta_7(38), \theta_7(38) \rangle \} \end{aligned}} & {\!\begin{aligned} 
               N' &= \left \langle \frac{11\pi}{12}, \frac{7\pi}{8} \right \rangle  \\   
                  &\in \{ \langle \theta_7(28), \theta_8(28) \rangle \} \end{aligned}} \\                                       
{\!\begin{aligned} 
               O' &= \left \langle \frac{11\pi}{12}, \frac{11\pi}{12} \right \rangle  \\   
                  &\in \{ \langle \theta_7(28), \theta_7(28) \rangle \} \end{aligned}} & {\!\begin{aligned} 
               P' &= \left \langle \pi, \frac{3\pi}{4} \right \rangle  \\   
                  &\in \{ \langle \theta_7(22), \theta_8(22) \rangle \} \end{aligned}} \\                       
{\!\begin{aligned} 
               Q' &= \left \langle \pi, \pi  \right \rangle  \\   
                  &\in \{ \langle \theta_7(22), \theta_7(22) \rangle \} \end{aligned}} \\     
                  \\
\caption{Legend of Table~\ref{fig:Comp3}}            
\label{tab:Comp3}                                                                    
\end{longtable}
\FloatBarrier

Computing $\rho(\gamma_7,\gamma_8)$
\begin{flalign*}
\qquad &L_7 = \{10, 5, 10\sqrt{3}, 15, 4, 8.0718, 8, 8, 4, 13.4944, 12, 2\sqrt{6}, 7.101, 8.026, 10, 5\sqrt{2}, 5\sqrt{2}, 10, \\
& 4, 6, 2(\sqrt{6}+\sqrt{2}), 8, 12, 8, 12, 2\sqrt{2}, 9.1716, 5.2264 \} \\
&L_8 = \{ 10, 10, 10\sqrt{2}, 20, 10, 10, 18.478, 10, 10, 10\sqrt{2}, 20, 10, 10, 18.478, 10, 10, 10\sqrt{2}, \\
& 20, 10, 10, 18.478, 10, 10, 10\sqrt{2}, 20, 10, 10, 18.478 \} 
\end{flalign*}

Tables~\ref{tab:IPFP-1}, ~\ref{tab:IPFP-2}, ~\ref{tab:IPFP-3} and ~\ref{tab:IPFP-4} give a detailed explanation of the IPFP transformation used in this paper. 
\begin{enumerate}
\item Table~\ref{tab:IPFP-1} gives the row sum and column sum that will be maintained in row fitting and column fitting respectively.
\item Table~\ref{tab:IPFP-2}, is the initial table, on which row fitting of the first iteration is performed.
\item Table~\ref{tab:IPFP-3}, is the result of row fitting. The value of each cell is obtained as follows:
$$r_{n,o} = \frac{q_{n,o} * s_r(n)}{s_q(n)}$$
where, $r_{n,o}$ represents the value in nth row and oth column of Table~\ref{tab:IPFP-3}, $q_{n,o}$ represents the value in nth row and oth column of Table~\ref{tab:IPFP-2}, $s_r(n)$ represents the nth row sum of Table~\ref{tab:IPFP-3} and $s_q(n)$ represents the nth row sum of Table~\ref{tab:IPFP-2}.
\item Table~\ref{tab:IPFP-4}, is the result of column fitting. The value of each cell is obtained as follows:
$$c_{n,o} = \frac{r_{n,o} * s_c(o)}{s_r(o)}$$
where, $c_{n,o}$ represents the value in nth row and oth column of Table~\ref{tab:IPFP-4}, $r_{n,o}$ represents the value in nth row and oth column of Table~\ref{tab:IPFP-3}, $s_c(o)$ represents the oth column sum of Table~\ref{tab:IPFP-4} and $s_r(o)$ represents the oth row sum of Table~\ref{tab:IPFP-3}.
\end{enumerate}

The iteration stops after Table~\ref{tab:IPFP-4}, as the column and row sums of this table are equal to that of Table~\ref{tab:IPFP-1}, up to 3 decimal places. 

\begin{table}
\begin{tabular}{>{$}l<{$}  l  l >{\bfseries}l }
\toprule
h & $\bm{l_7(h)}$ & $\bm{l_8(h)}$ & TOTAL\\ 
\midrule
1& 10 & 10 & 20\\  
\midrule
2 & 5 & 10 & 15\\
\midrule
3 & 17.3205 & 14.1421 & 	31.4626\\
\midrule
4 & 15 & 20 & 35\\
\midrule
5 & 4 & 10 & 14\\
\midrule
6 & 8.0718 & 10 & 18.0718\\
\midrule
7 & 8 & 18.478 & 26.478\\
\midrule
8 & 8 & 10 & 18 \\
\midrule
9 & 4 & 10 & 14 \\
\midrule
10 & 13.4944 & 14.1421 & 27.6365 \\
\midrule
11 & 12 & 20 & 32 \\
\midrule
12 & 4.899 & 10 & 14.899 \\
\midrule
13 & 7.101 & 10 & 17.101 \\
\midrule
14 & 8.026 & 18.478 & 26.504 \\
\midrule
15 & 10 & 10 & 20 \\
\midrule
16 & 7.0711 & 10 & 17.0711 \\
\midrule
17 & 7.0711 & 14.1421 & 21.2132 \\
\midrule
18 & 10 & 20 & 30 \\
\midrule
19 & 4 & 10 & 14 \\
\midrule
20 & 6 & 10 & 16 \\
\midrule
21 & 7.7274 & 18.478 & 26.2054 \\
\midrule
22 & 8 & 10 & 18 \\
\midrule
23 & 12 & 10 & 22 \\
\midrule
24 & 8 & 14.1421 & 22.1421 \\
\midrule
25 & 12 & 20 & 32 \\
\midrule
26 & 2.8284 & 10 & 12.8284 \\
\midrule
27 & 9.1716 & 10 & 19.1716 \\
\midrule
28 & 5.2264 & 18.478 & 23.7044 \\
\midrule
\bf{TOTAL} & \bf{234.0087} & \bf{370.4805} & 604.4892 \\ 	
\bottomrule
\end{tabular}
\caption{Table whose rows are populated with corresponding elements of sets $L_7$ and $L_8$. This table is constructed to compute the Row and Column totals that needs to be maintained in Row Fitting and Column Fitting respectively}
\label{tab:IPFP-1}
\end{table}

\begin{table}
\begin{tabular}{>{$}l<{$}  l  l >{\bfseries}l  }
\toprule
h & $\bm{l_7'(h)}$ & $\bm{l_8'(h)}$ & TOTAL\\ 
\midrule
1 & 1 & 1 & 2\\  
\midrule
2 & 1 & 1 & 2\\
\midrule
3 & 1 & 1 & 	2\\
\midrule
4 & 1 & 1 & 2\\
\midrule
5 & 1 & 1 & 2\\
\midrule
6 & 1 & 1 & 2\\
\midrule
7 & 1 & 1 & 2\\
\midrule
8 & 1 & 1 & 2 \\
\midrule
9 & 1 & 1 & 2 \\
\midrule
10 & 1 & 1 & 2 \\
\midrule
11 & 1 & 1 & 2 \\
\midrule
12 & 1 & 1 & 2 \\
\midrule
13 & 1 & 1 & 2 \\
\midrule
14 & 1 & 1 & 2 \\
\midrule
15 & 1 & 1 & 2 \\
\midrule
16 & 1 & 1 & 2 \\
\midrule
17 & 1 & 1 & 2 \\
\midrule
18 & 1 & 1 & 2 \\
\midrule
19 & 1 & 1 & 2 \\
\midrule
20 & 1 & 1 & 2 \\
\midrule
21 & 1 & 1 & 2 \\
\midrule
22 & 1 & 1 & 2 \\
\midrule
23 & 1 & 1 & 2 \\
\midrule
24 & 1 & 1 & 2 \\
\midrule
25 & 1 & 1 & 2 \\
\midrule
26 & 1 & 1 & 2 \\
\midrule
27 & 1 & 1 & 2 \\
\midrule
28 & 1 & 1 & 2 \\
\midrule
\bf{TOTAL} & \bf{28} & \bf{28} & 56 \\ 
\bottomrule
\end{tabular}
\caption{Initial table. The table on which the Row fitting of the first iteration.}
\label{tab:IPFP-2}
\end{table}

\begin{table}[h]
\resizebox{\textwidth}{!}{%
\begin{subtable}[h]{0.6\linewidth}
\centering
\begin{tabular}{>{$}l<{$}  l  l >{\bfseries}l  }
\toprule
$h$ & $\bm{l_7'(h)}$ & $\bm{l_8'(h)}$ & {\sc total}\\ 
\midrule
1 & 10 & 10 & 20\\  
\midrule
2 & 7.5 & 7.5 & 15\\
\midrule
3 & 15.7313 & 15.7313 & 	31.4626\\
\midrule
4 & 17.5 & 17.5 & 35\\
\midrule
5 & 7 & 7 & 14\\
\midrule
6 & 9.0359 & 9.0359 & 18.0718\\
\midrule
7 & 13.239 & 13.239 & 26.478\\
\midrule
8 & 9 & 9 & 18 \\
\midrule
9 & 7 & 7 & 14 \\
\midrule
10 & 13.81825 & 3.818251 & 27.6365 \\
\midrule
11 & 16 & 16 & 32 \\
\midrule
12 & 7.4495 & 7.4995 & 14.899 \\
\midrule
13 & 8.5505 &  8.5505 & 17.101 \\
\midrule
14 & 13.2520 & 13.2520 & 26.5040 \\
\midrule
15 & 10 & 10 & 20 \\
\midrule
16 & 8.53555 & 8.53555 & 17.0711 \\
\midrule
17 & 10.6066 & 10.6066 & 21.2132 \\
\midrule
18 & 15 & 15 & 30 \\
\midrule
19 & 7 & 7 & 14 \\
\midrule
20 & 8 & 8 & 16 \\
\midrule
21 & 13.1027 & 13.1027 & 26.2054 \\
\midrule
22 & 9 & 9 & 18 \\
\midrule
23 & 11 & 11 & 22 \\
\midrule
24 & 11.07105 & 11.07105 & 22.1421 \\
\midrule
25 & 16 & 16 & 32 \\
\midrule
26 & 6.4142 & 6.142 & 12.8284 \\
\midrule
27 & 9.5858 & 9.5858 & 19.1716 \\
\midrule
28 & 11.8522  & 11.8522 & 23.7044 \\
\midrule
{\sc total} & \bf{302.2446} & \bf{302.2446} & 604.4892 \\ 
\bottomrule
\end{tabular}
\caption{Table obtained on performing row fitting.}
\label{tab:IPFP-3}
\end{subtable}%
\hfill
\begin{subtable}[h]{0.6\linewidth}
\centering
\begin{tabular}{llll}
\toprule
$h$ & $\bm{l_7'(h)}$ & $\bm{l_8'(h)}$ & TOTAL\\ 
\midrule
1 & 7.7424 & 12.2576 & 20\\  
\midrule
2 & 5.8068 & 9.1932 & 15\\
\midrule
3 & 12.1798 & 19.2829 & 	31.4627\\
\midrule
4 & 13.5491 & 21.4509 & 35\\
\midrule
5 & 5.4197 & 8.5803 & 14\\
\midrule
6 & 6.9959 & 11.0759 & 18.0718\\
\midrule
7 & 10.2501 & 16.2279 & 26.478\\
\midrule
8 & 6.9681 & 11.0319 & 18 \\
\midrule
9 & 5.4197 & 8.5803 & 14 \\
\midrule
10 & 10.6986 & 16.9379 & 27.6365 \\
\midrule
11 & 12.3878 & 19.6122 & 32 \\
\midrule
12 & 5.7677 & 9.1313 & 14.899 \\
\midrule
13 & 6.6201 & 10.4809 & 17.101 \\
\midrule
14 & 10.2602 & 16.2438 & 26.504 \\
\midrule
15 & 7.7424 & 12.2576 & 20 \\
\midrule
16 & 6.6085 & 10.4625 & 17.071 \\
\midrule
17 & 8.212 & 13.0012 & 21.2132 \\
\midrule
18 & 11.6135 & 18.3865 & 30 \\
\midrule
19 & 5.4197 & 8.5803 & 14 \\
\midrule
20 & 6.1939 & 9.8061 & 16 \\
\midrule
21 & 10.1446 & 16.0608 & 26.2054 \\
\midrule
22 & 6.9681 & 11.0319 & 18 \\
\midrule
23 & 8.5166 & 13.4834 & 22 \\
\midrule
24 & 8.5716 & 13.5705 & 22.1421 \\
\midrule
25 & 12.3878 & 19.6122 & 32 \\
\midrule
26 & 4.9661 & 7.8623 & 12.8284 \\
\midrule
27 & 7.4217 & 11.7499 & 19.1716 \\
\midrule
28 & 9.1764 & 14.5280 & 23.7044 \\
\midrule
{\sc total} & \bf{234.0087} & \bf{370.4805} & 604.4892 \\ 
\bottomrule
\end{tabular}
\caption{Table obtained on performing column fitting.}
\label{tab:IPFP-4}
\end{subtable}
}
\caption{IPFP First Iteration}
\label{tab:IPFPIteration1}
\end{table}
\FloatBarrier
\begin{flalign*}
\qquad & \text{Considering any row from the Table~\ref{tab:IPFP-4}, we compute $m$} & \\
& m = \frac{12.2576}{7.7424} = 1.5832 &\\
& y =  1.5832 x \text{, equation of the expected line}&
\end{flalign*}

Figure~\ref{fig:Comp3-edge} shows the relation between the
corresponding elements of $L_7$ and $L_8$. It also indicates, for each
pair $\langle l_7(h), l_8(h) \rangle$ the corresponding point,
$\langle l_7'(h), l_8'(h) \rangle$, on the $y=1.5985 x$ line. Further,
Table~\ref{tab:Comp3-edge} provides the legend for this figure.

\begin{figure}[h]
\centering
\includegraphics[width=0.6\textwidth]{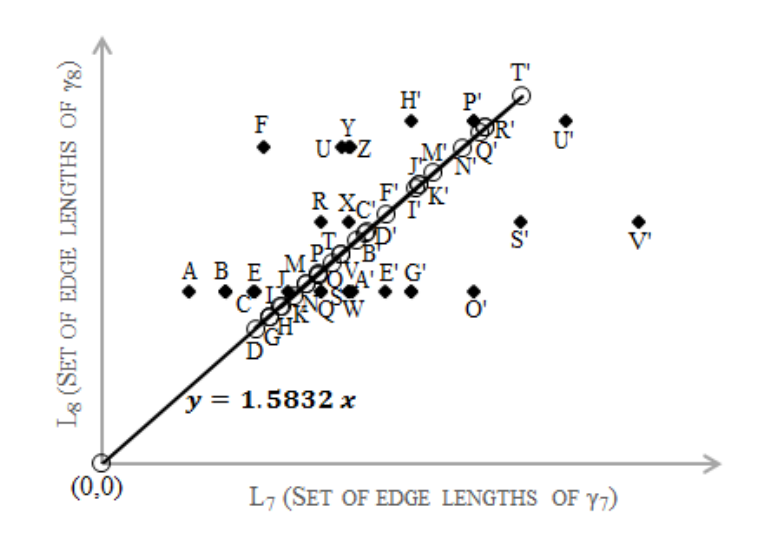}
\caption{\scriptsize Solid dots show the existing relation between $L_7$ and $L_8$, whereas the line passing through the hollow dots shows the expected relation between $L_7$ and $L_8$}
\label{fig:Comp3-edge}
\end{figure}
\FloatBarrier
\begin{longtable}{ >{$}l<{$} >{$}l<{$} }
{\!\begin{aligned} 
               A &= \langle 2.8284, 10 \rangle  \\   
                  &\in \{ \langle l_7(26), l_8(26) \rangle \} \end{aligned}} & {\!\begin{aligned} 
               B &= \langle 4, 10 \rangle  \\   
                  &\in \{ \langle l_7(5), l_8(5) \rangle, \langle l_7(9), l_8(9) \rangle, \langle l_7(19), l_8(19) \rangle \} \end{aligned}} \\
{\!\begin{aligned} 
               C &= \langle 4.899, 10 \rangle  \\   
                  &\in \{ \langle l_7(12), l_8(12) \rangle \} \end{aligned}} & {\!\begin{aligned} 
               D &= \langle 4.9661	, 7.8623 \rangle  \\   
                  &\in \{ \langle l_7'(26), l_8'(26) \rangle \} \end{aligned}} \\    
{\!\begin{aligned} 
               E &= \langle 5, 10 \rangle  \\   
                  &\in \{ \langle l_7(2), l_8(2) \rangle \} \end{aligned}} & {\!\begin{aligned} 
               F &= \langle 5.2264, 18.478 \rangle  \\   
                  &\in \{ \langle l_7(28), l_8(28) \rangle \} \end{aligned}}\\      
{\!\begin{aligned} 
              G &= \langle 5.4197, 8.5803 \rangle  \\   
                  &\in \{ \langle l_7'(5), l_8'(5) \rangle, \langle l_7'(9), l_8'(9), \langle l_7'(19), l_8'(19) \rangle \} \end{aligned}} & {\!\begin{aligned} 
              H &= \langle 5.7677, 9.1313 \rangle  \\   
                  &\in \{ \langle l_7'(12), l_8'(12) \rangle \} \end{aligned}} \\     
{\!\begin{aligned} 
               I &= \langle 5.8068, 9.1932 \rangle  \\   
                  &\in \{ \langle l_7'(2), l_8'(2) \rangle \} \end{aligned}} & {\!\begin{aligned} 
               J &= \langle 6, 10 \rangle  \\   
                  &\in \{ \langle l_7(20), l_8(20) \rangle \} \end{aligned}} \\   
{\!\begin{aligned} 
              K &= \langle 6.1939,	9.8061 \rangle  \\   
                  &\in \{ \langle l_7'(20), l_8'(20) \rangle \} \end{aligned}} & {\!\begin{aligned} 
               M &= \langle 6.6085	, 10.4625 \rangle  \\   
                  &\in \{ \langle l_7'(16), l_8'(16) \rangle \} \end{aligned}} \\                             
{\!\begin{aligned} 
               N &= \langle 6.6201	, 10.4809 \rangle  \\   
                  &\in \{ \langle l_7'(13), l_8'(13) \rangle \} \end{aligned}} & {\!\begin{aligned} 
               O &= \langle 6.9681	, 11.0319 \rangle  \\   
                  &\in \{ \langle l_7'(8), l_8'(8) \rangle, \langle l_7'(22), l_8'(22) \rangle \} \end{aligned}} \\
{\!\begin{aligned} 
               P &= \langle 6.9959, 11.0759 \rangle  \\   
                  &\in \{ \langle l_7'(6), l_8'(6) \rangle \} \end{aligned}} & {\!\begin{aligned} 
               Q &= \langle 7.0711, 10 \rangle  \\   
                  &\in \{ \langle l_7(16), l_8(16) \rangle \} \end{aligned}} \\ 
{\!\begin{aligned} 
               R &= \langle 7.0711, 14.1421 \rangle  \\   
                  &\in \{ \langle l_7(17), l_8(17) \rangle \} \end{aligned}} & {\!\begin{aligned} 
               S &= \langle 7.101, 10 \rangle  \\   
                  &\in \{ \langle l_7(13), l_8(13) \rangle \} \end{aligned}} \\                     
{\!\begin{aligned} 
               T &= \langle 7.4217, 11.7499 \rangle  \\   
                  &\in \{ \langle l_7'(27), l_8'(27) \rangle \} \end{aligned}} & {\!\begin{aligned} 
               U &= \langle 7.7274	, 18.4780 \rangle  \\   
                  &\in \{ \langle l_7(21), l_8(21) \rangle \} \end{aligned}} \\             
{\!\begin{aligned} 
               V &= \langle 7.7424, 12.2576 \rangle  \\   
                  &\in \{ \langle l_7'(1), l_8'(1) \rangle, \langle l_7'(15), l_8'(15) \rangle \} \end{aligned}} & {\!\begin{aligned} 
               W &= \langle 8, 10 \rangle  \\   
                  &\in \{ \langle l_7(8), l_8(8) \rangle, \langle l_7(22), l_8(22) \rangle \} \end{aligned}} \\  
{\!\begin{aligned} 
               X &= \langle 8, 14.1421 \rangle  \\   
                  &\in \{ \langle l_7(24), l_8(24) \rangle \} \end{aligned}} & {\!\begin{aligned} 
               Y &= \langle 8, 18.478 \rangle  \\   
                  &\in \{ \langle l_7(7), l_8(7) \rangle \} \end{aligned}}  \\  
{\!\begin{aligned} 
               Z &= \langle 8.0260, 18.4780 \rangle  \\   
                  &\in \{ \langle l_7(14), l_8(14) \rangle \} \end{aligned}} & {\!\begin{aligned} 
               A' &= \langle 8.0718, 10 \rangle  \\   
                  &\in \{ \langle l_7(6), l_8(6) \rangle \} \end{aligned}} \\  
{\!\begin{aligned} 
               B' &= \langle 8.2120	, 13.0012 \rangle  \\   
                  &\in \{ \langle l_7'(17), l_8'(17) \rangle \} \end{aligned}} & {\!\begin{aligned} 
               C' &= \langle 8.5166, 13.4834 \rangle  \\   
                  &\in \{ \langle l_7'(23), l_8'(23) \rangle \} \end{aligned}} \\          
{\!\begin{aligned} 
               D' &= \langle 8.5716, 13.5705 \rangle  \\   
                  &\in \{ \langle l_7'(24), l_8'(24) \rangle \} \end{aligned}} & {\!\begin{aligned} 
               E' &= \langle 8.521, 13.6211 \rangle  \\   
                  &\in \{ \langle l_7'(24), l_8'(24) \rangle \} \end{aligned}} \\       
{\!\begin{aligned} 
               F' &= \langle 9.1764	, 14.5280 \rangle  \\   
                  &\in \{ \langle l_7'(28), l_8'(28) \rangle \} \end{aligned}} & {\!\begin{aligned} 
               G' &= \langle 10, 10 \rangle  \\   
                  &\in \{ \langle l_7(1), l_8(1) \rangle, \langle l_7(15), l_8(15) \rangle \} \end{aligned}} \\       
{\!\begin{aligned} 
               H' &= \langle 10, 20 \rangle  \\   
                  &\in \{ \langle l_7(18), l_8(18) \rangle \} \end{aligned}} & {\!\begin{aligned} 
               I' &= \langle 10.1446, 16.0608 \rangle  \\   
                  &\in \{ \langle l_7'(21), l_8'(21) \rangle \} \end{aligned}} \\                          
{\!\begin{aligned} 
               J' &= \langle 10.2501, 16.2279 \rangle  \\   
                  &\in \{ \langle l_7'(7), l_8'(7) \rangle \} \end{aligned}} & {\!\begin{aligned} 
               K' &= \langle 10.2602, 16.2438 \rangle  \\   
                  &\in \{ \langle l_7'(14), l_8'(14) \rangle \} \end{aligned}} \\                                                                                                                                                                                                                                                                                       
{\!\begin{aligned} 
               M' &= \langle 10.6986, 16.9379 \rangle  \\   
                  &\in \{ \langle l_7'(10), l_8'(10) \rangle \} \end{aligned}} & {\!\begin{aligned} 
               N' &= \langle 11.6135, 18.3865 \rangle  \\   
                  &\in \{ \langle l_7'(18), l_8'(18) \rangle \} \end{aligned}} \\                             
{\!\begin{aligned} 
               O' &= \langle 12, 10 \rangle  \\   
                  &\in \{ \langle l_7(23), l_8(23) \rangle \} \end{aligned}} & {\!\begin{aligned} 
               P' &=  \langle 12, 20 \rangle  \\   
                  &\in \{ \langle l_7(11), l_8(11) \rangle, \langle l_7(25), l_8(25) \rangle \} \end{aligned}} \\           
{\!\begin{aligned} 
               Q' &= \langle 12.1798, 19.2829 \rangle  \\   
                  &\in \{ \langle l_7'(3), l_8'(3) \rangle \} \end{aligned}} & {\!\begin{aligned} 
               R' &= \langle 12.3878	19.6122 \rangle  \\   
                  &\in \{ \langle l_7'(11), l_8'(11) \rangle, \langle l_7'(25), l_8'(25) \rangle \} \end{aligned}} \\              
{\!\begin{aligned} 
               S' &=  \langle 13.4944, 14.1421 \rangle  \\   
                  &\in \{ \langle l_7(10), l_8(10) \rangle \} \end{aligned}} & {\!\begin{aligned} 
               T' &= \langle 13.5491	21.4509 \rangle  \\   
                  &\in \{ \langle l_7'(4), l_8'(4) \rangle \} \end{aligned}} \\     
{\!\begin{aligned} 
               U' &= \langle 15, 20 \rangle  \\   
                  &\in \{ \langle l_7(4), l_8(4) \rangle \} \end{aligned}} & {\!\begin{aligned} 
               V' &= \langle 17.3205, 14.1421 \rangle  \\   
                  &\in \{ \langle l_7(3), l_8(3) \rangle \} \end{aligned}} \\        
\caption{Legend of Figure~\ref{fig:Comp3-edge}}  
\label{tab:Comp3-edge}                                                                                                          
\end{longtable}
\FloatBarrier

Now, the Euclidean Distance is computed using \eqref{euclideanEqn} 

\begin{tabular}{ >{$}l<{$} @{\;=\;} >{$}l<{$} >{$}l<{$} @{\;=\;} >{$}l<{$} >{$}l<{$} @{\;=\;} >{$}l<{$} >{$}l<{$} @{\;=\;} >{$}l<{$}}
\Delta_{7,8}(1) & 3.1928 & \Delta_{7,8}(2) & 1.1409 & \Delta_{7,8}(3) & 7.2701 & \Delta_{7,8}(4) & 2.0518 \\ 
\Delta_{7,8}(5) & 2.0077 & \Delta_{7,8}(6) & 1.5215 & \Delta_{7,8}(7) & 3.1821 & \Delta_{7,8}(8) & 1.2285 \\  
\Delta_{7,8}(9) & 2.0077 & \Delta_{7,8}(10) & 3.9539 & \Delta_{7,8}(11) & 0.5484 & \Delta_{7,8}(12) & 1.1803 \\
\Delta_{7,8}(13) & 0.6801 & \Delta_{7,8}(14) & 3.1596 & \Delta_{7,8}(15) & 3.1928 & \Delta_{7,8}(16) & 0.6541\\
\Delta_{7,8}(17) & 1.6135 & \Delta_{7,8}(18) & 2.2819 & \Delta_{7,8}(19) & 2.0077 & \Delta_{7,8}(20) & 0.2742  \\
\Delta_{7,8}(21) & 3.4184 & \Delta_{7,8}(22) & 1.4593 & \Delta_{7,8}(23) & 4.9263 & \Delta_{7,8}(24) & 0.8084 \\
\Delta_{7,8}(25) & 0.5484 & \Delta_{7,8}(26) & 3.0231 & \Delta_{7,8}(27) & 2.4748 & \Delta_{7,8}(28) & 5.5861 \\
\end{tabular}
\begin{flalign*}
\qquad & \sum_{h=1}^{28} \Delta_{7,8}(h) =65.6736 & \\
& \bm{\rho(\gamma_7,\gamma_8)} = 0.7011 \text{, using \eqref{rhoEqn}}&
\end{flalign*}

Computing $d(\gamma_7,\gamma_8)$
\begin{flalign*}
\qquad & \bm{d(\gamma_7,\gamma_8)} = 0.8106 \text{, with } \beta = 0.5 \text{, using \eqref{dDefinitionEqn}}&
\end{flalign*}
\FloatBarrier


\end{document}